\documentclass{article}
\PassOptionsToPackage{finalizecache=false,frozencache=true}{minted}
\usepackage{arxiv}

\usepackage[utf8]{inputenc} 
\usepackage[T1]{fontenc}    
\usepackage{hyperref}       
\usepackage{url}            
\usepackage{booktabs}       
\usepackage{amsfonts}       
\usepackage{nicefrac}       
\usepackage{microtype}      
\usepackage{xcolor}         
\usepackage{graphicx}
\usepackage{subcaption}
\usepackage{booktabs}
\usepackage{tikz-network}
\usepackage{natbib}

\usepackage{amsmath}
\usepackage{amssymb}
\usepackage{mathtools}
\usepackage{amsthm}
\usepackage{bm}
\usepackage[noframes,nobars]{ffcode}
\usepackage{nicefrac}
\usepackage{siunitx}
\usepackage[capitalize,noabbrev]{cleveref}

\usepackage{thmtools, thm-restate}

\usepackage[acronym]{glossaries}
\usepackage{glossaries-extra}
\setabbreviationstyle{long-short-sc}

\newabbreviation[
  longplural={Gaussian Processes}
]{gp}{gp}{Gaussian Process}
\newabbreviation[
]{clt}{clt}{Central Limit Theorem}
\newabbreviation[
  longplural={Hilbert--space Gaussian Processes}
]{hgp}{hgp}{Hilbert--space Gaussian Process}
\newabbreviation[
  longplural={Tensor Networks}
]{tn}{tn}{Tensor Network}
\newabbreviation[
]{cpd}{cpd}{Canonical Polyadic Decomposition}
\newabbreviation[
]{tt}{tt}{Tensor Train}
\newabbreviation[
]{bf}{bf}{Basis Function}
\newabbreviation[
]{ml}{ml}{maximum likelihood}
\newabbreviation[
]{map}{map}{maximum a posteriori}
\newabbreviation[
]{dnn}{dnn}{Deep Neural Network}
\newabbreviation[
]{cnn}{cnn}{Convolutional Neural Network}
\newabbreviation[
]{rnn}{rnn}{Recurrent Neural Network}
\newabbreviation[
]{gnn}{gnn}{Graph Neural Network}
\newabbreviation[
]{mera}{mera}{Multi-Scale Entanglement Renormalization Ansatz}
\newabbreviation[
]{pdf}{pdf}{Probability Density Function}
\newabbreviation[
]{cdf}{cdf}{Cumulative Density Function}
\newabbreviation[
]{rmse}{rmse}{Root Mean Squared Error}
\newabbreviation[
]{krr}{krr}{Kernel Ridge Regression}
\newabbreviation[
]{rr}{rr}{Ridge Regression}
\newabbreviation[
]{lr}{lr}{Lasso Regression}
\newabbreviation[
]{als}{als}{Alternating Least-Squares}
\newabbreviation[
]{uci}{uci}{University of California, Irvine}
\newabbreviation[
]{kkt}{kkt}{Karush-Kuhn-Tucker}
\newabbreviation[
]{mlsvd}{mlsvd}{Multilinear Singular Value Decomposition}
\newabbreviation[
]{kmlsvd}{kmlsvd}{Kernel Multilinear Singular Value Decomposition}
\newabbreviation[
]{hosvd}{hosvd}{Higher-Order Singular Value Decomposition}
\newabbreviation[
]{svd}{svd}{Singular Value Decomposition}
\newabbreviation[
]{ksvd}{ksvd}{Kernel Singular Value Decomposition}
\newabbreviation[
]{pca}{pca}{Principal Component Analysis}
\newabbreviation[
]{kpca}{kpca}{Kernel Principal Component Analysis}
\newabbreviation[
]{svm}{svm}{Support Vector Machine}
\newabbreviation[
]{ls-svm}{ls-svm}{Least-Squares Support Vector Machine}
\newabbreviation[
]{rff}{rff}{Random Fourier Features}
\theoremstyle{plain}
\newtheorem{theorem}{Theorem}[section]

\newtheorem{corollary}[theorem]{Corollary}
\theoremstyle{definition}
\newtheorem{definition}[theorem]{Definition}

\theoremstyle{remark}
\newtheorem{remark}[theorem]{Remark}
\newtheorem{example}[theorem]{Example}

\crefname{equation}{equation}{equations}
\Crefname{equation}{Equation}{Equations}
\crefname{theorem}{theorem}{theorem}
\Crefname{theorem}{Theorem}{Theorems}
\crefname{definition}{definition}{definitions}
\crefname{corollary}{corollary}{corollaries}
\Crefname{corollary}{Corollary}{Corollaries}
\crefname{lemma}{lemma}{lemmas}
\Crefname{lemma}{Lemma}{Lemmas}
\crefname{appendix}{appendix}{appendices}
\crefname{section}{section}{sections}
\crefname{figure}{figure}{figures}

\newcommand{\mat}[1]{\bm{#1}}
\newcommand{\ten}[1]{\bm{\mathcal{#1}}}

\newcommand{\vectorize}[1]{\operatorname{\ff{vec}}\left(#1\right)}

\newcommand{\kron}{\otimes}

\newcommand{\trans}{^\mathrm{T}}

\newcommand{\pinv}{^\dagger}
\newcommand{\inv}{^{-1}}
\DeclareMathOperator{\Tr}{Tr}

\DeclareMathOperator{\isdef}{\coloneqq}
\DeclareMathOperator{\defis}{\reflectbox{$\coloneqq$}}

\title{A Kernelizable Primal-Dual Formulation of the Multilinear Singular Value Decomposition}

\date{}

\newif\ifuniqueAffiliation
\uniqueAffiliationtrue

\ifuniqueAffiliation 
\author{Frederiek Wesel\\
    Delft Center for Systems and Control\\
	Delft University of Technology\\
	The Netherlands\\
	\texttt{f.wesel@tudelft.nl} \\
	\And
	Kim Batselier \\
    Delft Center for Systems and Control\\
	Delft University of Technology\\
	The Netherlands\\
	\texttt{k.batselier@tudelft.nl} \\
}


\begin{document}
\maketitle

\begin{abstract}
    The ability to express a learning task in terms of a primal and a dual optimization problem lies at the core of a plethora of machine learning methods. 
    For example, \gls*{svm}, \gls*{ls-svm}, \gls*{rr}, \gls*{lr}, \gls*{pca}, and more recently \gls*{svd} have all been defined either in terms of primal weights or in terms of dual Lagrange multipliers. 
    The primal formulation is computationally advantageous in the case of large sample size while the dual is preferred for high-dimensional data. Crucially, said learning
    problems can be made nonlinear through the introduction of a feature map in the primal problem, which corresponds to applying the kernel trick in the dual. In this paper we derive a primal-dual formulation of the \gls*{mlsvd}, which recovers as special cases both \gls*{pca} and \gls*{svd}. Besides enabling computational gains through the derived primal formulation, we propose a nonlinear extension of the \gls*{mlsvd} using feature maps, which results in a dual problem where a kernel tensor
    arises. We discuss potential applications in the context of signal analysis and deep learning.
\end{abstract}

\keywords{\emph{Multilinear Singular Value Decomposition} (\gls*{mlsvd}) \and Tucker Decomposition \and Tensor  Decompositions \and Tensor Networks \and Primal-Dual \and Kernel Machines}
\section{Introduction}

The linear \emph{Support Vector Machine} (\gls*{svm}) was for the first time extended to the nonlinear case by \citet{boser_training_1992}, giving rise to modern \gls*{svm} theory. 
Central to this extension is the primal-dual formulation of the learning problem, which allows for a nonlinear extension in terms of a primal feature map $\mat{\phi}(\cdot):\mathbb{R}^{N_1}\rightarrow \mathbb{R}^{M}$ that maps an inputs $\mat{x} \in \mathbb{R}^{N_1}$ to a higher (possibly infinite) dimensional space $\mathbb{R}^{M}$. 
Solving the corresponding dual problem requires then the evaluation of all pairwise \emph{kernel} evaluations $\kappa(\mat{x}_{n},\mat{x}_{n'}) \isdef {\mat{\phi}(\mat{x}_{n})}\trans\,\mat{\phi}(\mat{x}_{n'})$. The so-called \emph{kernel trick} ensures that these kernel evaluations can be computed without ever explicitly mapping the inputs to the higher-dimensional space.

This approach has been applied to a plethora of methods \citep{scholkopf_learning_2002}, e.g. \emph{Least-Squares Support Vector Machine} (\gls*{ls-svm}) \citep{suykens_least_1999}, \emph{Ridge Regression} (\gls*{rr}) \citep{saunders_ridge_1998}, \emph{Lasso Regression} (\gls*{lr}) \citep{roth_generalized_2004}, \emph{Principal Component Analysis} (\gls*{pca}) \citep{mika_kernel_1998,suykens_support_2003} and more recently to the \emph{Singular Value Decomposition} (\gls*{svd}) \citet{suykens_svd_2016} in order to yield their \emph{kernelized} variants.

Among those, \gls*{pca} is an ubiquitous unsupervised learning approach which seeks an orthogonal subspace that maximizes the covariance between the samples of data.
Its kernelized counterpart, \gls*{kpca}, seeks an orthogonal subspace which maximizes the covariance of samples of data mapped into a higher dimensional space.
By construction, \gls*{kpca} does not provide any information regarding the row subspace of the data matrix, meaning that if there is any asymmetry in the data, it will not be captured.
A related but different method is the \gls*{svd}, which factors a data matrix in terms of orthogonal row and column subspaces linked by a positive diagonal matrix of so-called \emph{singular values}. The \gls*{svd} has been cast in the primal-dual framework by \citet{suykens_svd_2016}, who also proposed a \gls*{ksvd} extension in terms of feature maps of both rows and columns of the data matrix, coupled together by a \emph{compatibility matrix}. 
In contrast with \gls*{kpca}, said construction gives rise to kernel functions that can be asymmetric and non-positive and thus are arguably better suited to model real-life data, which often is nonlinear and asymmetric \citep{tao_nonlinear_2023}.
The \emph{Multilinear Singular Value Decomposition} (\gls*{mlsvd}) \citep{de_lathauwer_multilinear_2000}
extends the concept of the \gls*{svd} to higher-order arrays, also known as tensors. In simple terms, the \gls*{mlsvd} factors a data tensor in terms of orthogonal subspaces corresponding to each mode coupled by a \emph{core tensor}, which unlike in the \gls*{svd} case, does not need to be diagonal. The \gls*{mlsvd} has numerous applications in signal and image processing, computer vision, chemometrics, finance, human motion analysis, data mining, machine learning and deep learning \citet{kolda_tensor_2009,cichocki_tensor_2016,cichocki_tensor_2017,panagakis_tensor_2021}.

In this paper we extend the Lanczos decomposition theorem of matrices \citep{lanczos_linear_1958} to the tensor case. 
This allows us to derive a primal-dual formulation of the \gls*{mlsvd}, which recovers as special cases both \gls*{pca} and the \gls*{svd}. 
The newly established primal formulation can be used to attain computational gains in the large sample regime, and allows us to kernelize the \gls*{mlsvd} by means of feature maps. These feature maps define the construction of a kernel tensor in the dual as opposed to a kernel matrix in the case of \gls*{pca} and \gls*{svd}. 
Similarly to the \gls*{svd} case, the tensor kernel does not need to be symmetric or positive.
We discuss possible choices of kernel functions and applications, which range from signal analysis to deep learning.

The remainder of the paper is structured as follows. In \cref{sec:background} we provide the background related to the \gls*{mlsvd} as well as a theorem that will be useful to prove our main result in \cref{sec:proof}. We discuss related work in \cref{sec:related_work} and formulate recommendations for future work in \cref{sec:conclusion}.

\section{Background}\label{sec:background}
In the remainder of this paper we denote tensors with uppercase calligraphic bold e.g. $\ten{X}$, matrices in uppercase bold e.g. $\mat{X}$, vectors in lowecase bold $\mat{x}$ and scalars in lowercase, e.g. $x$. We denote the mode-$d$ unfolding of a tensor $\ten{X}$ \citep{kolda_tensor_2009} with $\mat{X}_{(d)}$, the Kronecker product with $\kron$ and column-major vectorization with $\vectorize{\cdot}$.

The compact \gls*{svd} of a rank-$R$ matrix $\mat{X} \in \mathbb{R}^{N_1 \times N_2}$ can be written as $\vectorize{\mat{X}} = \left( \mat{U}_2 \kron \mat{U}_1 \right)\vectorize{\mat{S}}$, with semi-orthogonal matrices $\mat{U}_1 \in \mathbb{R}^{N_1 \times R}$ and $\mat{U}_2 \in \mathbb{R}^{N_2 \times R}$ such that $\mat{U}_{1}\trans \mat{U}_{1} = \mat{I}_{R_1}$, $\mat{U}_{2}\trans \mat{U}_{2} = \mat{I}_{R_2}$ and a square diagonal matrix $\mat{S} \in \mathbb{R}^{R \times R}$ of singular values.
The \gls*{mlsvd}, also known as the \gls*{hosvd}, is one way to generalize the \gls*{svd} of matrices to higher-order tensors \citep{de_lathauwer_multilinear_2000}, which expresses a $D$th-order tensor in terms of $D$ coupled orthogonal subspaces.
In order to simplify notation and increase the readability we will consider from now on the case $D=3$ without any loss of generality. For the general case we refer the reader to \cref{appendixA}.
\begin{definition}[\emph{Multilinear Singular Value Decomposition} (\gls*{mlsvd}) \citep{de_lathauwer_multilinear_2000}]\label{def:mlsvd}
        The rank-$(R_1,R_2,R_3)$ \gls*{mlsvd} of a $3$rd-order tensor $\ten{X} \in \mathbb{R}^{N_1 \times N_2 \times N_3}$ is
        \begin{align}
            \vectorize{\ten{X}} &= \left(\mat{U}_3 \kron  \mat{U}_2 \kron \mat{U}_1 \right)\vectorize{\ten{S}},
            \label{eq:mlsvd}
        \end{align}
    where $\mat{U}_1 \in \mathbb{R}^{N_1 \times R_1} ,\mat{U}_2 \in \mathbb{R}^{N_2 \times R_2},\mat{U}_3 \in \mathbb{R}^{N_3 \times R_3}$ are semi-orthogonal factor matrices and $\ten{S} \in \mathbb{R}^{R_1 \times R_2 \times R_3}$ is the \emph{core} tensor such that the matrices $\mat{S}_{(1)}\mat{S}_{(1)}\trans$, $\mat{S}_{(2)}\mat{S}_{(2)}\trans$, $\mat{S}_{(3)}\mat{S}_{(3)}\trans$ are positive diagonal.
\end{definition}
In practice, the \gls*{mlsvd} of a $3$-rd order tensor can be computed by $3$ \gls*{svd} factorizations as elucidated in \citet{de_lathauwer_multilinear_2000}. Each \gls*{svd} provides $\mat{U}_1$, $\mat{U}_2$ and $\mat{U}_3$ as the left-singular values of the respective unfoldings $\mat{X}_{(1)}$, $\mat{X}_{(2)}$, $\mat{X}_{(3)}$. The core tensor is then computed by solving \cref{eq:mlsvd} for $\ten{S}$.

\section{A Primal-Dual formulation for the \gls*{mlsvd}}\label{sec:proof}
Before presenting our main result, i.e. a primal-dual formulation of the \gls*{mlsvd}, we need to generalize an important theorem by \citet{lanczos_linear_1958}, who defines \emph{shifted} eigenvalue problems which are equivalent to the \gls*{svd}.
Similarly, the \gls*{mlsvd} of a $3$rd-order tensor $\ten{X} \in \mathbb{R}^{N_1 \times N_2 \times N_3}$ can then be uniquely defined as a set of $3$ coupled matrix equations. A generalization of the proof to tensors of higher order is straightforward.
\begin{theorem}[Generalized Lanczos decomposition theorem]\label{lemma:mlsvd}
     An arbitrary rank-$(R_1,R_2,R_3)$ tensor $\ten{X} \in \mathbb{R}^{N_1 \times N_2 \times N_3}$ can be written in \gls*{mlsvd} form, i.e. as in \cref{eq:mlsvd} with core tensor $\ten{S}\in\mathbb{R}^{R_1\times R_2 \times R_3}$ and semi-orthogonal factor matrices $\mat{U}_{(1)}\in\mathbb{R}^{N_1\times R_1}$, $\mat{U}_{(2)}\in\mathbb{R}^{N_2\times R_2}$ and $\mat{U}_{(3)}\in\mathbb{R}^{N_3\times R_3}$ defined by the following set of equations 
    \begin{align}
    \begin{split}\label{eq:lanczos}
    \mat{U}_1\, \mat{S}_{(1)} &= \mat{X}_{(1)} \left(\mat{U}_3 \kron \mat{U}_2 \right),\\
    \mat{U}_2\, \mat{S}_{(2)} &= \mat{X}_{(2)} \left(\mat{U}_3 \kron \mat{U}_1 \right),\\
    \mat{U}_3\, \mat{S}_{(3)} &= \mat{X}_{(3)} \left(\mat{U}_2 \kron \mat{U}_1 \right),
    \end{split}
    \end{align}
    with the additional constraint that $\mat{S}_{(1)}\mat{S}_{(1)}\trans$,  $\mat{S}_{(2)}\mat{S}_{(2)}\trans$,  $\mat{S}_{(3)}\mat{S}_{(3)}\trans$ are positive diagonal matrices.
\end{theorem}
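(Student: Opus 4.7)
The plan is to establish both directions of the equivalence between the \gls*{mlsvd} form \cref{eq:mlsvd} and the coupled system \cref{eq:lanczos}, in analogy with how the classical Lanczos argument relates the \gls*{svd} to a shifted eigenvalue problem coupling left and right singular vectors.

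The forward direction (\gls*{mlsvd} implies the three equations) is immediate: unfolding \cref{eq:mlsvd} along mode $1$ yields $\mat{X}_{(1)} = \mat{U}_1 \mat{S}_{(1)} (\mat{U}_3 \kron \mat{U}_2)\trans$, and right-multiplying by $(\mat{U}_3 \kron \mat{U}_2)$ together with the mixed-product identity $(\mat{U}_3 \kron \mat{U}_2)\trans (\mat{U}_3 \kron \mat{U}_2) = (\mat{U}_3\trans \mat{U}_3) \kron (\mat{U}_2\trans \mat{U}_2) = \mat{I}_{R_3} \kron \mat{I}_{R_2}$ recovers the first row of \cref{eq:lanczos}. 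The other two rows follow symmetrically, and the positive-diagonality of each $\mat{S}_{(d)} \mat{S}_{(d)}\trans$ is inherited from \cref{def:mlsvd}.

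For the converse, I would proceed in two steps. First, positive-diagonality of $\mat{S}_{(d)} \mat{S}_{(d)}\trans \in \mathbb{R}^{R_d \times R_d}$ forces $\mat{S}_{(d)}$ to have full row rank $R_d$; hence $\mat{U}_d \mat{S}_{(d)}$ and $\mat{U}_d$ share the same $R_d$-dimensional column space. The rank-$(R_1, R_2, R_3)$ assumption on $\ten{X}$ gives $\rank(\mat{X}_{(d)}) = R_d$, so each equation of \cref{eq:lanczos} implies that the column span of $\mat{U}_d$ coincides with the column span of $\mat{X}_{(d)}$, i.e., $\mat{U}_d$ is an orthonormal basis of the mode-$d$ subspace of $\ten{X}$. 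Second, using the standard multilinear-rank fact that the row space of $\mat{X}_{(1)}$ lies in the tensor product of the mode-$2$ and mode-$3$ subspaces of $\ten{X}$, and therefore in the column span of $(\mat{U}_3 \kron \mat{U}_2)$, the orthogonal projector $(\mat{U}_3 \kron \mat{U}_2)(\mat{U}_3 \kron \mat{U}_2)\trans$ acts as the identity on the rows of $\mat{X}_{(1)}$. Right-multiplying the first row of \cref{eq:lanczos} by $(\mat{U}_3 \kron \mat{U}_2)\trans$ thus recovers $\mat{X}_{(1)} = \mat{U}_1 \mat{S}_{(1)} (\mat{U}_3 \kron \mat{U}_2)\trans$, which is the mode-$1$ unfolding of \cref{eq:mlsvd}; re-vectorizing yields the desired form, and the analogous argument for modes $2$ and $3$ confirms a single consistent core tensor $\ten{S}$.

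The main obstacle is the multilinear-rank containment invoked in the second step: justifying that the row space of $\mat{X}_{(d)}$ lies in the Kronecker of the other two mode-subspaces. This follows from the existence of any Tucker-form factorization with $\mat{A}_e$ spanning the mode-$e$ subspace, whose mode-$1$ unfolding $\mat{A}_1 \mat{G}_{(1)} (\mat{A}_3 \kron \mat{A}_2)\trans$ immediately exhibits the containment; the coincidence of column spans established in the first step then allows replacing each $\mat{A}_e$ by $\mat{U}_e$. Everything else in the converse reduces to orthogonality bookkeeping via the mixed-product rule.
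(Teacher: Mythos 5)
Your argument is essentially correct as a proof that semi-orthogonal factors $\mat{U}_d$ and a core $\ten{S}$ with positive-diagonal $\mat{S}_{(d)}\mat{S}_{(d)}\trans$ satisfying \cref{eq:lanczos} reproduce the \gls*{mlsvd} of $\ten{X}$, and it reaches that conclusion by a genuinely different route than the paper. For the reconstruction step, the paper right-multiplies \cref{eq:lanczos} by $\mat{S}_{(d)}\trans\mat{U}_d\trans$, reads the result as an eigendecomposition to conclude that $\mat{U}_d$ spans the column space of $\mat{X}_{(d)}$, writes $\mat{X}_{(d)}=\mat{U}_d\mat{R}_d$, and solves for $\mat{R}_d$. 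You instead obtain the column-space identification directly from a rank count and then close the argument via the multilinear-rank containment $\mathrm{row}(\mat{X}_{(1)})\subseteq\mathrm{col}(\mat{U}_3\kron\mat{U}_2)$, so that the projector $(\mat{U}_3\kron\mat{U}_2)(\mat{U}_3\kron\mat{U}_2)\trans$ fixes $\mat{X}_{(1)}$. That containment is precisely what the paper's step from \cref{eq:proof_5} to \cref{eq:proof_6} uses implicitly (the claimed ``if and only if'' there requires the row space of $\mat{R}_1$ to lie in $\mathrm{col}(\mat{U}_3\kron\mat{U}_2)$), so making it explicit and justifying it via an arbitrary Tucker factorization is an improvement in rigor. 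Your forward direction is routine and is not spelled out in the paper.

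The one substantive omission is the first half of the paper's proof: deriving the semi-orthogonality $\mat{U}_d\trans\mat{U}_d=\mat{I}_{R_d}$ from \cref{eq:lanczos} itself. You use it freely --- in promoting ``the column span of $\mat{U}_d$ coincides with that of $\mat{X}_{(d)}$'' to ``$\mat{U}_d$ is an orthonormal basis,'' in asserting that this span is $R_d$-dimensional (which already needs $\mat{U}_d$ to have full column rank), and in treating $(\mat{U}_3\kron\mat{U}_2)(\mat{U}_3\kron\mat{U}_2)\trans$ as an orthogonal projector --- but never establish it. The paper obtains it by left-multiplying each equation by $\mat{U}_d\trans$, observing that the three right-hand sides are unfoldings of a single tensor $\ten{D}=\ten{X}\times_1\mat{U}_1\trans\times_2\mat{U}_2\trans\times_3\mat{U}_3\trans$, and equating the resulting expressions for $\vectorize{\ten{D}}$ pairwise to force $\mat{U}_d\trans\mat{U}_d=\mat{I}_{R_d}$. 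If you read semi-orthogonality as a hypothesis (which the theorem's phrasing permits), your proof is complete but proves a weaker statement than the paper intends; if the coupled equations together with the positive-diagonal constraint are meant to characterize the decomposition on their own, you must supply this step.
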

\begin{proof}
    The proof is divided in two steps, first we show that the factor matrices $\mat{U}_1$, $\mat{U}_2$, $\mat{U}_3$ are semi-orthogonal, second we show that indeed \cref{eq:lanczos} implies the \gls*{mlsvd} i.e. \cref{eq:mlsvd}.
     We begin by left-multiplying each side of \cref{eq:lanczos} respectively with $\mat{U}_1\trans$, $\mat{U}_2\trans$, $\mat{U}_3\trans$, resulting in three equations of the form
    \begin{align}
    \begin{split}\label{eq:proof_1}
        \mat{U}_1\trans \mat{U}_1 \mat{S}_{(1)} &= \mat{U}_1\trans \mat{X}_{(1)} \left(\mat{U}_3 \kron \mat{U}_2 \right) \defis \mat{D}_{(1)},\\
        \mat{U}_2\trans \mat{U}_2 \mat{S}_{(2)} &= \mat{U}_2\trans \mat{X}_{(2)} \left(\mat{U}_3 \kron \mat{U}_1 \right) \defis  \mat{D}_{(2)},\\
        \mat{U}_3\trans \mat{U}_3 \mat{S}_{(3)} &= \mat{U}_3\trans \mat{X}_{(3)} \left(\mat{U}_2 \kron \mat{U}_1 \right) \defis \mat{D}_{(3)}.
    \end{split}
    \end{align}
    By construction, all three right-hand sides of \cref{eq:proof_1} are different unfoldings of the same tensor $\ten{D}\in\mathbb{R}^{R_1\times R_2 \times R_3}$ for any choice of $\ten{X}$ and $\mat{U}_{1}$, $\mat{U}_{2}$, $\mat{U}_{3}$. 
    Vectorizing both sides of the equations yields
    \begin{align*}
        (\mat{I}_{R_3} \kron \mat{I}_{R_2} \kron \mat{U}_1\trans \mat{U}_1) \vectorize{\ten{S}} = \vectorize{\ten{D}},\\
        (\mat{I}_{R_3} \kron \mat{U}_2\trans \mat{U}_2 \kron \mat{I}_{R_1}) \vectorize{\ten{S}} = \vectorize{\ten{D}},\\
        (\mat{U}_3\trans \mat{U}_3 \kron \mat{I}_{R_2} \kron \mat{I}_{R_1}) \vectorize{\ten{S}} = \vectorize{\ten{D}},
    \end{align*}
    Equating any two out of the $3 \choose 2$ pairs of equations e.g. the first one with the second one results in
    \begin{align*}
        &(\mat{I}_{R_3} \kron \mat{I}_{R_2} \kron \mat{U}_1\trans \mat{U}_1) \vectorize{\ten{S}} \\
        =& (\mat{I}_{R_3} \kron \mat{U}_2\trans \mat{U}_2 \kron \mat{I}_{R_1}) \vectorize{\ten{S}}.
    \end{align*}
    This equality holds if $\mat{U}_1\trans \mat{U}_1=\mat{U}_2\trans \mat{U}_2 = \mat{0} $ (trivial solution), which we do not consider. If $\mat{U}_2\trans \mat{U}_2$ is full-rank and thus invertible, the right-hand side is invertible. Left-multiplying by the inverse of the right-hand side yields
    \begin{align*}
        &(\mat{I}_{R_3} \kron \mat{I}_{R_2} \kron \mat{I}_{R_1})\vectorize{\ten{S}} \\
        =&(\mat{I}_{R_3} \kron \mat{U}_2\trans \mat{U}_2 \kron \mat{I}_{R_1})^{-1} (\mat{I}_{R_3} \kron \mat{I}_{R_2} \kron \mat{U}_1\trans \mat{U}_1) \vectorize{\ten{S}}  \\
        =&(\mat{I}_{R_3} \kron (\mat{U}_2\trans \mat{U}_2)^{-1} \kron \mat{U}_1\trans \mat{U}_1) \vectorize{\ten{S}},
    \end{align*}
    where the second equality follows from the mixed-product property, see \citet{loan_ubiquitous_2000}.
    The equality holds if and only if $(\mat{U}_2\trans \mat{U}_2)^{-1} = \mat{I}_{R_2}$ and $\mat{U}_{1}\trans \mat{U}_{1} = \mat{I}_{R_1}$.
    Repeating the argument with at least $\lceil \frac{3}{2} \rceil$ unique pairs out of the $3 \choose 2$ pairs of equations yields, apart from the trivial $\mat{U}_1=\mat{U}_2=\mat{U}_3=\mat{0}$ solution,
        \begin{align}
        \begin{split}\label{eq:proof_2}
            \mat{U}_1\trans \mat{U}_1 = \mat{I}_{R_1},\\
            \mat{U}_2\trans \mat{U}_2 = \mat{I}_{R_2},\\
            \mat{U}_3\trans \mat{U}_3 = \mat{I}_{R_3},
        \end{split}
        \end{align}
    which implies that $\mat{U}_1$, $\mat{U}_2$ and $\mat{U}_3$ are semi-orthogonal.
    Right-multiplying both sides of \cref{eq:lanczos} by respectively $\mat{S}_{(1)}\trans \mat{U}_1\trans$, $\mat{S}_{(2)}\trans \mat{U}_2\trans$, $\mat{S}_{(3)}\trans \mat{U}_3\trans$ yields
    \begin{align}
    \begin{split}\label{eq:proof_3}
        \mat{U}_1\, \mat{S}_{(1)} \mat{S}_{(1)}\trans \mat{U}_1\trans &= \mat{X}_{(1)} \left(\mat{U}_3 \kron \mat{U}_2 \right) \mat{S}_{(1)}\trans \mat{U}_1\trans,\\
        \mat{U}_2\, \mat{S}_{(2)} \mat{S}_{(2)}\trans \mat{U}_2\trans &= \mat{X}_{(2)} \left(\mat{U}_3 \kron \mat{U}_1 \right) \mat{S}_{(2)}\trans \mat{U}_2\trans,\\
        \mat{U}_3\, \mat{S}_{(3)} \mat{S}_{(3)}\trans \mat{U}_3\trans &= \mat{X}_{(3)} \left(\mat{U}_2 \kron \mat{U}_1 \right) \mat{S}_{(3)}\trans \mat{U}_3\trans.
    \end{split}
    \end{align}    
    The left-hand side is the eigendecomposition of the right-hand side of \cref{eq:proof_3} due to the assumption that $\mat{S}_{(1)}\mat{S}_{(1)}\trans$, $\mat{S}_{(2)}\mat{S}_{(2)}\trans$, $\mat{S}_{(3)}\mat{S}_{(3)}\trans$ are positive diagonal matrices (eigenvalues) and the above proof that $\mat{U}_1$, $\mat{U}_2$, $\mat{U}_3$ are semi-orthogonal matrices (eigenvectors). From \cref{eq:proof_3} it follows that $\mat{U}_1$ is an orthogonal basis for the column space of $\mat{X}_{(1)}$, and likewise for the other unfoldings. We can therefore write
    \begin{align}
    \begin{split}\label{eq:proof_4}
        \mat{U}_1\, \mat{S}_{(1)} \mat{S}_{(1)}\trans \mat{U}_1\trans &= \mat{U}_{1} \mat{R}_1 \left(\mat{U}_3 \kron \mat{U}_2 \right) \mat{S}_{(1)}\trans \mat{U}_1\trans,\\
        \mat{U}_2\, \mat{S}_{(2)} \mat{S}_{(2)}\trans \mat{U}_2\trans &= \mat{U}_{2} \mat{R}_2 \left(\mat{U}_3 \kron \mat{U}_1 \right) \mat{S}_{(2)}\trans \mat{U}_2\trans,\\
        \mat{U}_3\, \mat{S}_{(3)} \mat{S}_{(3)}\trans \mat{U}_3\trans &= \mat{U}_{3} \mat{R}_3\left(\mat{U}_2 \kron \mat{U}_1 \right) \mat{S}_{(3)}\trans \mat{U}_3\trans,
    \end{split}
    \end{align}    
    where $\mat{R}_1\in\mathbb{R}^{R_1 \times N_2 N_3}$, $\mat{R}_2\in\mathbb{R}^{R_2\times N_1 N_3}$ and $\mat{R_3}\in\mathbb{R}^{R_3 \times N_1 N_2}$ are general coefficient matrices.
    From \cref{eq:proof_4} follows that
    \begin{align}
    \begin{split}\label{eq:proof_5}
        \mat{S}_{(1)} = \mat{R}_1 \left(\mat{U}_3 \kron \mat{U}_2 \right),\\
        \mat{S}_{(2)} = \mat{R}_2 \left(\mat{U}_3 \kron \mat{U}_1 \right),\\
        \mat{S}_{(3)} = \mat{R}_3 \left(\mat{U}_2 \kron \mat{U}_1 \right),
    \end{split}
    \end{align}  
    which by the semi-orthogonality of $\mat{U}_1$, $\mat{U}_2$ and $\mat{U}_3$ is satisfied if and only if
    \begin{align}
    \begin{split}\label{eq:proof_6}
         \mat{R}_1 = \mat{S}_{(1)}\left(\mat{U}_3\trans \kron \mat{U}_2\trans \right) ,\\
         \mat{R}_2 = \mat{S}_{(2)}\left(\mat{U}_3\trans \kron \mat{U}_1\trans \right) ,\\
         \mat{R}_3 = \mat{S}_{(3)}\left(\mat{U}_2\trans \kron \mat{U}_1\trans \right) .\\
    \end{split}
    \end{align}  
    Substitution of \cref{eq:proof_6} into $\mat{X}_{(1)}=\mat{U}_1\,\mat{R}_1$, $\mat{X}_{(2)}=\mat{U}_2\,\mat{R}_2$, $\mat{X}_{(3)}=\mat{U}_3\,\mat{R}_3$ we conclude that $\vectorize{\ten{X}} = \left(\mat{U}_3 \kron  \mat{U}_2 \kron \mat{U}_1 \right)\vectorize{\ten{S}}$, which is the defining \cref{eq:mlsvd} of the \gls*{mlsvd} as in \cref{def:mlsvd}.
\end{proof}
Equipped with \cref{lemma:mlsvd}, we can now formulate the \gls*{mlsvd} as a primal-dual optimization problem by defining the \emph{primal} \gls*{mlsvd} optimization problem in its most general context.
\begin{definition}[Primal \gls*{mlsvd} optimization problem]\label{def:primal_problem}
Given three feature matrices $\mat{\Phi}_1 \in \mathbb{R}^{N_1 \times M_1},\mat{\Phi}_2\in\mathbb{R}^{N_2 \times M_2},\mat{\Phi}_3\in\mathbb{R}^{N_3 \times M_3}$, a compatibility tensor $\ten{C}\in\mathbb{R}^{M_1\times M_2\times M_3}$ and regularization parameters $\ten{S}\in\mathbb{R}^{R_1\times R_2\times R_3}$ such that $\mat{S}_{(1)}\mat{S}_{(1)}\trans$, $\mat{S}_{(2)}\mat{S}_{(2)}\trans$ and $\mat{S}_{(3)}\mat{S}_{(3)}\trans$ are positive diagonal matrices, we define the primal optimization problem as
    \begin{align}
    \begin{split}
        \label{eq:primal_problem}
        &\max_{\mat{W}_1,\mat{W}_2,\mat{W}_3,\mat{E}_1,\mat{E}_2,\mat{E}_3} J(\mat{W}_1,\mat{W}_2,\mat{W}_3,\mat{E}_1,\mat{E}_2,\mat{E}_3) \isdef \\
                &  \frac{1}{2}\sum_{d=1}^3 \Tr{\left(\mat{E}_d \, (\mat{S}_{(d)}\mat{S}_{(d)}\trans)^{-1}\, \mat{E}_d\trans \right)} \\
                &- 2\,\vectorize{\ten{C}}\trans\,\left(\mat{W}_3 \kron \mat{W}_2 \kron \mat{W}_1 \right)\, \vectorize{\ten{S}} \\
                & + \frac{1}{2} \vectorize{\ten{C}}^T \left(\mat{\Phi}_3\trans \mat{\Phi}_3 \kron \mat{\Phi}_2\trans \mat{\Phi}_2  \kron \mat{\Phi}_1\trans \mat{\Phi}_1\right) \vectorize{\ten{C}} \\
                \textrm{s.t.:} \ &  \mat{E}_1 = \mat{\Phi}_1 \, \mat{C}_{(1)} \, \left( \mat{W}_3 \kron \mat{W}_2 \right) \, \mat{S}_{(1)}\trans,\\
    & \mat{E}_2= \mat{\Phi}_2 \, \mat{C}_{(2)} \, \left( \mat{W}_3 \kron \mat{W}_1 \right) \, \mat{S}_{(2)}\trans,\\
    & \mat{E}_3 =  \mat{\Phi}_3 \, \mat{C}_{(3)} \, \left( \mat{W}_2 \kron \mat{W}_1 \right) \, \mat{S}_{(3)}\trans.
    \end{split}
    \end{align}
\end{definition}
In \cref{eq:primal_problem} we seek weights matrices $\mat{W}_1\in\mathbb{R}^{M_1\times R_1}$, $\mat{W}_2\in \mathbb{R}^{M_2\times R_2}$, $\mat{W}_3 \in \mathbb{R}^{M_3\times R_3}$ and error matrices $\mat{E}_1 \in \mathbb{R}^{N_1 \times R_1}$, $\mat{E}_2\in\mathbb{R}^{N_2 \times R_2}$, $\mat{E}_3\in\mathbb{R}^{N_3\times R_3}$ that maximize an objective function $J$ composed of a term that maximizes the variance associated with each feature matrix, a regularization term that acts on the weights and an optional constant term that ensures that the cost at the optimum is zero. 
For now we assume that the features $\mat{\Phi}$, compatibility tensor $\ten{C}$ and the regularization parameter $\ten{S}$ are given and not necessarily data-dependent, we will later see examine relevant choices.
We now establish a link between the \emph{primal} \gls*{mlsvd} optimization problem presented in \cref{def:primal_problem} and the \emph{dual} \gls*{mlsvd} optimization problem.
\begin{theorem}\label{thm:mlsvd}
    The dual optimization problem associated with the primal optimization problem of \cref{def:primal_problem} is the \gls*{mlsvd} of the kernel tensor $\ten{K}\in\mathbb{C}^{N_1\times N_2\times N_3}$. The kernel tensor $\ten{K}$ is defined as
    \begin{equation}
    \label{eq:Kten}
        \vectorize{\ten{K}} \isdef \left(\mat{\Phi}_3 \kron \mat{\Phi}_2 \kron \mat{\Phi}_1 \right) \; \vectorize{\ten{C}}.
    \end{equation}
    \label{thm:nonlinear_mlsvd}
\end{theorem}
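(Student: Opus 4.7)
The plan is to derive the dual via Lagrangian duality and show that its stationarity conditions are exactly the generalized Lanczos system of \cref{lemma:mlsvd} applied to the tensor $\ten{K}$ defined in \cref{eq:Kten}. I would start by introducing Lagrange multipliers $\mat{\Lambda}_1\in\mathbb{R}^{N_1\times R_1}$, $\mat{\Lambda}_2\in\mathbb{R}^{N_2\times R_2}$ and $\mat{\Lambda}_3\in\mathbb{R}^{N_3\times R_3}$ for the three equality constraints of \cref{eq:primal_problem}, adding a term $\sum_{d=1}^{3}\Tr(\mat{\Lambda}_d\trans(\mat{\Phi}_d\mat{C}_{(d)}(\cdots)\mat{S}_{(d)}\trans-\mat{E}_d))$ to $J$ to form the Lagrangian $L$. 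The purely quadratic $\vectorize{\ten{C}}$ term does not involve the primal variables, so it only realises the constant offset advertised in \cref{def:primal_problem} and plays no role in the stationarity conditions.

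I would then write out the \gls*{kkt} conditions. Stationarity with respect to $\mat{E}_d$, using $\tfrac{1}{2}\Tr(\mat{E}_d(\mat{S}_{(d)}\mat{S}_{(d)}\trans)^{-1}\mat{E}_d\trans)$, gives $\mat{E}_d = \mat{\Lambda}_d\,\mat{S}_{(d)}\mat{S}_{(d)}\trans$, which identifies the dual multipliers as rescaled errors. Stationarity with respect to each $\mat{W}_d$ combines the linear term $-2\vectorize{\ten{C}}\trans(\mat{W}_3\kron\mat{W}_2\kron\mat{W}_1)\vectorize{\ten{S}}$ with the two constraints in which $\mat{W}_d$ appears; repeated use of the mixed-product property and of the identity $\vectorize{\mat{A}\mat{X}\mat{B}\trans}=(\mat{B}\kron\mat{A})\vectorize{\mat{X}}$ lets me eliminate $\mat{W}_d$ in favour of $\mat{C}_{(d)}\trans\mat{\Phi}_d\trans\mat{\Lambda}_d$, up to the appropriate unfolding.

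Substituting both families of \gls*{kkt} relations back into the three primal constraints, the product $\mat{\Phi}_d\,\mat{C}_{(d)}\,(\mat{\Phi}_{k}\trans\kron\mat{\Phi}_{\ell}\trans)$ surfaces in each mode, which by \cref{eq:Kten} and the standard link between an unfolding and vectorization is exactly the mode-$d$ unfolding $\mat{K}_{(d)}$. Absorbing the factor $\mat{S}_{(d)}\mat{S}_{(d)}\trans$ into one side then collapses the system to
\begin{align*}
\mat{\Lambda}_1\,\mat{S}_{(1)} &= \mat{K}_{(1)}\,(\mat{\Lambda}_3\kron\mat{\Lambda}_2),\\
\mat{\Lambda}_2\,\mat{S}_{(2)} &= \mat{K}_{(2)}\,(\mat{\Lambda}_3\kron\mat{\Lambda}_1),\\
\mat{\Lambda}_3\,\mat{S}_{(3)} &= \mat{K}_{(3)}\,(\mat{\Lambda}_2\kron\mat{\Lambda}_1),
\end{align*}
which is precisely the generalized Lanczos system \cref{eq:lanczos} applied to $\ten{K}$. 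Invoking \cref{lemma:mlsvd}, together with the standing positive-diagonal assumption on the $\mat{S}_{(d)}\mat{S}_{(d)}\trans$, then certifies that $(\mat{\Lambda}_1,\mat{\Lambda}_2,\mat{\Lambda}_3,\ten{S})$ form the \gls*{mlsvd} of $\ten{K}$, establishing the theorem.

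The main obstacle will be the bookkeeping of Kronecker products and unfoldings across the three modes. Each $\mat{W}_d$ enters two of the three constraints, and the symbols $\mat{C}_{(d)}$ denote three different unfoldings of the same tensor $\ten{C}$, so one must vectorize, apply the mixed-product property, and re-unfold in the mode matching the dual equation being isolated. Once this manipulation is carried out uniformly, the identification with $\mat{K}_{(d)}$ via \cref{eq:Kten} is automatic and the reduction to \cref{lemma:mlsvd} becomes immediate.
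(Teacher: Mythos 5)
Your proposal follows essentially the same route as the paper: form the Lagrangian, use stationarity in $\mat{E}_d$ to get $\mat{E}_d=\mat{U}_d\,\mat{S}_{(d)}\mat{S}_{(d)}\trans$, use stationarity in $\mat{W}_d$ to relate weights to multipliers, substitute into the constraints to recover the generalized Lanczos system for $\ten{K}$, and invoke \cref{lemma:mlsvd}. One small slip: the stationarity conditions in $\mat{W}_d$ yield $\mat{W}_d=\mat{\Phi}_d\trans\mat{U}_d$ (up to the trivial zero solution), not $\mat{C}_{(d)}\trans\mat{\Phi}_d\trans\mat{U}_d$, which is not even dimensionally consistent with $\mat{W}_d\in\mathbb{R}^{M_d\times R_d}$; your own subsequent substitution, where $\mat{\Phi}_d\,\mat{C}_{(d)}\,(\mat{\Phi}_k\trans\kron\mat{\Phi}_\ell\trans)=\mat{K}_{(d)}$ emerges with the $\mat{C}_{(d)}$ coming from the constraint rather than from the weights, implicitly uses the correct relation, so the argument goes through as in the paper.
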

\begin{proof}
Consider the Lagrangian $\mathcal{L}$ associated with the primal optimization problem of \cref{def:primal_problem},
\begin{align*}
    &\mathcal{L}(\mat{W}_1,\mat{W}_2,\mat{W}_3,\mat{E}_1,\mat{E}_2,\mat{E}_3,\mat{U}_1,\mat{U}_2,\mat{U}_3) \\
    &= J(\mat{W}_1,\mat{W}_2,\mat{W}_3,\mat{E}_1,\mat{E}_2,\mat{E}_3) \\  
    &-\Tr{\left( \left(\mat{E}_1-\mat{\Phi}_1 \, \mat{C}_{(1)} \, \left( \mat{W}_3 \kron \mat{W}_2 \right) \, \mat{S}_{(1)}\trans \right) \mat{U}_1\trans \right)}\\
    &-\Tr{\left( \left(\mat{E}_2-\mat{\Phi}_2 \, \mat{C}_{(2)} \, \left( \mat{W}_3 \kron \mat{W}_1 \right) \, \mat{S}_{(2)}\trans \right) \mat{U}_2\trans \right)}\\
    &-\Tr{\left( \left(\mat{E}_3-\mat{\Phi}_3 \, \mat{C}_{(3)} \, \left( \mat{W}_2 \kron \mat{W}_1 \right) \, \mat{S}_{(3)}\trans \right) \mat{U}_3\trans \right)},
\end{align*}
where $\mat{U}_1\in\mathbb{R}^{N_1 \times R_1}$, $\mat{U}_2\in\mathbb{R}^{N_2 \times R_2}$, $\mat{U}_3\in\mathbb{R}^{N_3 \times R_3}$ are Lagrange multiplier matrices.
The \gls*{kkt} conditions result in
\begin{align*}
    &\frac{\partial \mathcal{L}}{\partial \mat{W}_1} = \mat{0} \iff 2\, {\mat{C}_{(1)}} (\mat{W}_3 \kron \mat{W}_2) \, \mat{S}_{(1)}\trans = {\mat{C}_{(1)}} \left( \mat{W}_3 \kron \mat{\Phi}_2\trans \mat{U}_2  +  \mat{\Phi}_3\trans \mat{U}_3 \kron \mat{W}_2 \right) \mat{S}_{(1)}\trans, \\
    &\frac{\partial \mathcal{L}}{\partial \mat{W}_2} = \mat{0} \iff  2\, {\mat{C}_{(2)}} (\mat{W}_3 \kron \mat{W}_1) \, \mat{S}_{(2)}\trans = {\mat{C}_{(2)}} \left( \mat{W}_3 \kron \mat{\Phi}_1\trans \mat{U}_1  +  \mat{\Phi}_3\trans \mat{U}_3 \kron \mat{W}_1 \right) \mat{S}_{(2)}\trans, \\
    &\frac{\partial \mathcal{L}}{\partial \mat{W}_3} = \mat{0} \iff  2\, {\mat{C}_{(3)}} (\mat{W}_2 \kron \mat{W}_1) \, \mat{S}_{(3)}\trans = {\mat{C}_{(3)}} \left( \mat{W}_2 \kron \mat{\Phi}_1\trans \mat{U}_1  +  \mat{\Phi}_2\trans \mat{U}_2 \kron \mat{W}_1 \right) \mat{S}_{(3)}\trans, \\  
    &\frac{\partial \mathcal{L}}{\partial \mat{E}_1} = \mat{0} \iff \mat{E}_1 = \mat{U}_1 \mat{S}_{(1)} \mat{S}_{(1)}\trans ,  \\
    &\frac{\partial \mathcal{L}}{\partial \mat{E}_2} = \mat{0} \iff \mat{E}_2 = \mat{U}_2 \mat{S}_{(2)} \mat{S}_{(2)}\trans ,  \\
    &\frac{\partial \mathcal{L}}{\partial \mat{E}_3} = \mat{0} \iff \mat{E}_3 = \mat{U}_3 \mat{S}_{(3)} \mat{S}_{(3)}\trans , \\
    &\frac{\partial \mathcal{L}}{\partial \mat{U}_1} = \mat{0} \iff  \mat{E}_1 = \mat{\Phi}_1 \, \mat{C}_{(1)} \, \left( \mat{W}_3 \kron \mat{W}_2 \right) \, \mat{S}_{(1)}\trans,\\
    &\frac{\partial \mathcal{L}}{\partial \mat{U}_2} = \mat{0} \iff \mat{E}_2= \mat{\Phi}_2 \, \mat{C}_{(2)} \, \left( \mat{W}_3 \kron \mat{W}_1 \right) \, \mat{S}_{(2)}\trans,\\
    &\frac{\partial \mathcal{L}}{\partial \mat{U}_3} = \mat{0} \iff \mat{E}_3= \mat{\Phi}_3 \, \mat{C}_{(3)} \, \left( \mat{W}_2 \kron \mat{W}_1 \right) \, \mat{S}_{(3)}\trans.
\end{align*}
The equality of the first three \gls*{kkt} conditions holds for the trivial solution $\mat{W}_1=\mat{0},\mat{W}_2=\mat{0},\mat{W}_3=\mat{0}$ or when
\begin{align*}
    \mat{\Phi}_1\trans \mat{U}_1 = \mat{W}_1, \; \mat{\Phi}_2\trans \mat{U}_2 = \mat{W}_2,  \; \mat{\Phi}_3\trans \mat{U}_3 =  \mat{W}_3.
\end{align*}
These expressions can be used to eliminate $\mat{W}_1$ and $\mat{E}_1$ to obtain
\begin{align*}
    \mat{U}_1 \mat{S}_{(1)} \mat{S}_{(1)}\trans &=  \mat{\Phi}_1 \, \mat{C}_{(1)} \, \left( \mat{\Phi}_3\trans  \kron \mat{\Phi}_2\trans \right) \left(\mat{U}_3 \kron \mat{U}_2 \right) \, \mat{S}_{(1)}\trans\\
     &= \mat{K}_{(1)} \left(\mat{U}_3 \kron \mat{U}_2 \right) \, \mat{S}_{(1)}\trans.
\end{align*}
The definition of $\ten{K}$ in \cref{eq:Kten} was used to write the second equality. Similarly we obtain expressions for the Lagrange multipliers $\mat{U}_2$ and $\mat{U}_3$ leading to
\begin{align*}
    \mat{U}_1 \mat{S}_{(1)} & = \mat{K}_{(1)} \left(\mat{U}_3 \kron \mat{U}_2 \right),\\
    \mat{U}_2 \mat{S}_{(2)} & = \mat{K}_{(2)} \left(\mat{U}_3 \kron \mat{U}_1 \right),\\
    \mat{U}_3 \mat{S}_{(3)} & = \mat{K}_{(3)} \left(\mat{U}_2 \kron \mat{U}_1 \right),
\end{align*}
which by \cref{lemma:mlsvd} is the \gls*{mlsvd} of the tensor $\ten{K}\in\mathbb{R}^{N_1\times N_2 \times N_3}$.
\end{proof}
\Cref{thm:mlsvd} establishes that instead of computing the \gls*{mlsvd} of a tensor $\ten{K}$, one can alternatively solve the optimization problem in \cref{def:primal_problem}. We now prove that the objective function $J$ is equal to zero in the \gls*{mlsvd} solution.
\begin{corollary}
The \gls*{mlsvd} solution of the dual problem in \cref{thm:mlsvd} result in a zero objective function $(J=0)$ in the primal optimization problem of \cref{def:primal_problem}.
\end{corollary}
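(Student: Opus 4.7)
The plan is to substitute the optimality conditions established in the proof of \cref{thm:mlsvd} into each of the three terms of the objective $J$ and show they sum to zero. Specifically, at the dual \gls*{mlsvd} solution we have $\mat{W}_d = \mat{\Phi}_d\trans \mat{U}_d$ and $\mat{E}_d = \mat{U}_d \mat{S}_{(d)} \mat{S}_{(d)}\trans$ for $d=1,2,3$, together with semi-orthogonality $\mat{U}_d\trans \mat{U}_d = \mat{I}_{R_d}$ guaranteed by \cref{lemma:mlsvd}.

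First I would simplify the trace term. Substituting $\mat{E}_d = \mat{U}_d \mat{S}_{(d)} \mat{S}_{(d)}\trans$ and using the cyclic property of the trace together with $\mat{U}_d\trans \mat{U}_d = \mat{I}_{R_d}$, the middle $(\mat{S}_{(d)}\mat{S}_{(d)}\trans)^{-1}$ (well-defined since $\mat{S}_{(d)}\mat{S}_{(d)}\trans$ is positive diagonal by assumption) cancels, leaving $\Tr(\mat{S}_{(d)}\mat{S}_{(d)}\trans) = \norm{\ten{S}}^2$. Since all three mode unfoldings share the same Frobenius norm, the first term equals $\tfrac{3}{2}\norm{\ten{S}}^2$.

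Next I would handle the cross term. Using $\mat{W}_d=\mat{\Phi}_d\trans\mat{U}_d$ and the mixed-product property, I factor $\mat{W}_3\kron\mat{W}_2\kron\mat{W}_1 = (\mat{\Phi}_3\trans\kron\mat{\Phi}_2\trans\kron\mat{\Phi}_1\trans)(\mat{U}_3\kron\mat{U}_2\kron\mat{U}_1)$. The definition of $\ten{K}$ in \cref{eq:Kten} turns $\vectorize{\ten{C}}\trans(\mat{\Phi}_3\trans\kron\mat{\Phi}_2\trans\kron\mat{\Phi}_1\trans)$ into $\vectorize{\ten{K}}\trans$. From \cref{thm:mlsvd} the dual factors give $\vectorize{\ten{K}} = (\mat{U}_3\kron\mat{U}_2\kron\mat{U}_1)\vectorize{\ten{S}}$, and semi-orthogonality applied via the mixed-product property collapses $(\mat{U}_3\trans\mat{U}_3)\kron(\mat{U}_2\trans\mat{U}_2)\kron(\mat{U}_1\trans\mat{U}_1)$ to an identity, yielding $-2\norm{\ten{S}}^2$ for this term. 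The final quadratic term is directly recognized as $\tfrac{1}{2}\norm{\ten{K}}^2$ by the definition of $\ten{K}$, and the same semi-orthogonality argument gives $\norm{\ten{K}}^2 = \norm{\ten{S}}^2$, contributing $\tfrac{1}{2}\norm{\ten{S}}^2$.

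Summing the three contributions, $\tfrac{3}{2}\norm{\ten{S}}^2 - 2\norm{\ten{S}}^2 + \tfrac{1}{2}\norm{\ten{S}}^2 = 0$, so $J=0$ at the \gls*{mlsvd} solution. The computation is essentially bookkeeping; the only mild obstacle is being careful with the Kronecker reshuffling in the middle term and justifying invertibility of $\mat{S}_{(d)}\mat{S}_{(d)}\trans$ in the first term, both of which follow from the standing hypotheses of \cref{def:primal_problem} and \cref{lemma:mlsvd}.
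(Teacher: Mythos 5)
Your proposal is correct and follows essentially the same route as the paper: substitute the KKT/stationarity conditions $\mat{E}_d = \mat{U}_d\,\mat{S}_{(d)}\mat{S}_{(d)}\trans$ and $\mat{W}_d = \mat{\Phi}_d\trans\mat{U}_d$, simplify the trace term by cyclic invariance and semi-orthogonality, and collapse the cross and quadratic terms via the mixed-product property and the definition of $\ten{K}$, yielding $\tfrac{3}{2}-2+\tfrac{1}{2}=0$. The only cosmetic difference is that you normalize everything to $\norm{\ten{S}}^2$ while the paper writes the common factor as $\vectorize{\ten{K}}\trans\vectorize{\ten{K}}$; these agree by orthogonality of the factor matrices.
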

\begin{proof}
Plugging in the \gls*{kkt} conditions for $\mat{E}_1,\mat{E}_2,\mat{E}_3$ and $\mat{W}_1,\mat{W}_2,\mat{W}_3$ into the primal objective function $J$ results in
\begin{align*}
\phantom{-}&\frac{1}{2}\sum_{d=1}^3 \Tr{\left(\mat{E}_d \, (\mat{S}_{(d)}\mat{S}_{(d)}\trans)^{-1}\, \mat{E}_d\trans \right)} - 2\vectorize{\ten{C}}\trans\,\left(\mat{W}_3 \kron \mat{W}_2 \kron \mat{W}_1 \right)\, \vectorize{\ten{S}} 
+ \frac{1}{2} \vectorize{\ten{K}}^T \vectorize{\ten{K}},\\
=&\frac{1}{2}\sum_{d=1}^3 \Tr{\left(\mat{U}_d \, \mat{S}_{(d)}\mat{S}_{(d)}\trans\, \mat{U}_d\trans \right)} 
- 2\vectorize{\ten{C}}\trans\,\left(\mat{\Phi}_3\trans \mat{U}_3 \kron \mat{\Phi}_2\trans \mat{U}_2 \kron \mat{\Phi}_1\trans \mat{U}_1 \right)\, \vectorize{\ten{S}} + \frac{1}{2} \vectorize{\ten{K}}^T \vectorize{\ten{K}}   \\
=&\left( \frac{3}{2} - 2+ \frac{1}{2}\right) \vectorize{\ten{K}}^T \vectorize{\ten{K}} =0. \\
\end{align*}
The first three variance terms are simplified using the cyclic permutation invariance of the Frobenius trace norm. The regularization term is simplified using the definition of the kernel tensor $\ten{K}$ and its \gls*{mlsvd}. The third term in the objective function is also simplified using the definition of the kernel tensor $\ten{K}$.
\end{proof}

\begin{figure*}
    \centering
    \begin{subcaptionblock}[T][][c]{.48\textwidth}
    \centering
    \begin{tikzpicture}
    \node (first) at (0, 0) {
    \begin{tikzpicture}
    \SetVertexStyle[MinSize=0.65\DefaultUnit]
    \Vertex[x=1,y=0,shape=circle,label=$\ten{C}$,fontsize=\normalsize,RGB,color={255,255,255}]{c}
    \Vertex[x=1,y=-3,shape=circle,label=$\ten{S}$,fontsize=\normalsize,RGB,color={255,255,255}]{s}
    \Vertex[x=0,y=-1,shape=circle,label=$\mat{\Phi}_1$,fontsize=\normalsize,RGB,color={255,255,255}]{phi}
    \Vertex[x=1,y=-1,shape=circle,label=$\mat{W}_2$,fontsize=\normalsize,RGB,color={255,255,255}]{w2}
    \Vertex[x=2,y=-1,shape=circle,label=$\mat{W}_3$,fontsize=\normalsize,RGB,color={255,255,255}]{w3}
    \Vertex[x=0,y=-2,shape=circle,fontsize=\normalsize,RGB,color={255,255,255},Pseudo]{e}
    \Edge[label=$M_1$,position=right](c)(phi)
    \Edge[label=$M_2$,position=right](c)(w2)
    \Edge[label=$M_3$,position=right](c)(w3)
    \Edge[label=$R_1$,position=right](s)(e)
    \Edge[label=$R_2$,position=right](s)(w2)
    \Edge[label=$R_3$,position=right](s)(w3)
    \Edge[label=$N_1$,position=right](phi)(e)
\end{tikzpicture}
    };
    \node (second) at (3, 0) {
    \begin{tikzpicture}
    \SetVertexStyle[MinSize=0.65\DefaultUnit]
    \Vertex[x=1,y=0,shape=circle,label=$\ten{C}$,fontsize=\normalsize,RGB,color={255,255,255}]{c}
    \Vertex[x=1,y=-3,shape=circle,label=$\ten{S}$,fontsize=\normalsize,RGB,color={255,255,255}]{s}
    \Vertex[x=1,y=-1,shape=circle,label=$\mat{\Phi}_2$,fontsize=\normalsize,RGB,color={255,255,255}]{phi}
    \Vertex[x=0,y=-1.5,shape=circle,label=$\mat{W}_1$,fontsize=\normalsize,RGB,color={255,255,255}]{w1}
    \Vertex[x=2,y=-1.5,shape=circle,label=$\mat{W}_3$,fontsize=\normalsize,RGB,color={255,255,255}]{w3}
    \Vertex[x=1,y=-2,shape=circle,fontsize=\normalsize,RGB,color={255,255,255},Pseudo]{e}
    \Edge[label=$M_2$,position=right](c)(phi)
    \Edge[label=$M_1$,position=right](c)(w1)
    \Edge[label=$M_3$,position=right](c)(w3)
    \Edge[label=$R_2$,position=right](s)(e)
    \Edge[label=$R_1$,position=right](s)(w1)
    \Edge[label=$R_3$,position=right](s)(w3)
    \Edge[label=$N_2$,position=right](phi)(e)
\end{tikzpicture}
    };
    \node (third) at (6, 0) {
    \begin{tikzpicture}
    \SetVertexStyle[MinSize=0.65\DefaultUnit]
    \Vertex[x=1,y=0,shape=circle,label=$\ten{C}$,fontsize=\normalsize,RGB,color={255,255,255}]{c}
    \Vertex[x=1,y=-3,shape=circle,label=$\ten{S}$,fontsize=\normalsize,RGB,color={255,255,255}]{s}
    \Vertex[x=2,y=-1,shape=circle,label=$\mat{\Phi}_3$,fontsize=\normalsize,RGB,color={255,255,255}]{phi}
    \Vertex[x=0,y=-1.5,shape=circle,label=$\mat{W}_1$,fontsize=\normalsize,RGB,color={255,255,255}]{w1}
    \Vertex[x=1,y=-1.5,shape=circle,label=$\mat{W}_2$,fontsize=\normalsize,RGB,color={255,255,255}]{w2}
    \Vertex[x=2,y=-2,shape=circle,fontsize=\normalsize,RGB,color={255,255,255},Pseudo]{e}
    \Edge[label=$M_3$,position=right](c)(phi)
    \Edge[label=$M_1$,position=right](c)(w1)
    \Edge[label=$M_2$,position=right](c)(w2)
    \Edge[label=$R_3$,position=right](s)(e)
    \Edge[label=$R_1$,position=right](s)(w1)
    \Edge[label=$R_2$,position=right](s)(w2)
    \Edge[label=$N_3$,position=right](phi)(e)
\end{tikzpicture}
    };
\end{tikzpicture}
    
    \caption{Primal formulation (\ref{eq:primal}), from left to right of $\mat{E}_1$, $\mat{E}_2$ and $\mat{E}_3$.}
    \label{fig:primal}
    \end{subcaptionblock}%
    \hfill
    \begin{subcaptionblock}[T][][c]{.48\textwidth}
    \centering
    \begin{tikzpicture}
    \node (first) at (0, 0) {
    \begin{tikzpicture}
    \SetVertexStyle[MinSize=0.65\DefaultUnit]
    \Vertex[x=1,y=0,shape=circle,label=$\ten{K}$,fontsize=\normalsize,RGB,color={255,255,255}]{c}
    \Vertex[x=1,y=-3,shape=circle,label=$\ten{S}$,fontsize=\normalsize,RGB,color={255,255,255}]{s}
    \Vertex[x=1,y=-1.5,shape=circle,label=$\mat{U}_2$,fontsize=\normalsize,RGB,color={255,255,255}]{w2}
    \Vertex[x=2,y=-1.5,shape=circle,label=$\mat{U}_3$,fontsize=\normalsize,RGB,color={255,255,255}]{w3}
    \Vertex[x=0.0,y=-1.5,shape=circle,fontsize=\normalsize,RGB,color={255,255,255},Pseudo]{e}
    \Edge[label=$N_2$,position=right](c)(w2)
    \Edge[label=$N_3$,position=right](c)(w3)
    \Edge[label=$R_3$,position=right](s)(w3)
    \Edge[label=$R_1$,position=right](s)(w1)
    \Edge[label=$R_2$,position=right](s)(w2)
    \Edge[label=$N_1$,position=right](c)(e)
\end{tikzpicture}
    };
    \node (second) at (3, 0) {
    \begin{tikzpicture}
    \SetVertexStyle[MinSize=0.65\DefaultUnit]
    \Vertex[x=1,y=0,shape=circle,label=$\ten{K}$,fontsize=\normalsize,RGB,color={255,255,255}]{c}
    \Vertex[x=1,y=-3,shape=circle,label=$\ten{S}$,fontsize=\normalsize,RGB,color={255,255,255}]{s}
    \Vertex[x=0,y=-1.5,shape=circle,label=$\mat{U}_1$,fontsize=\normalsize,RGB,color={255,255,255}]{w1}
    \Vertex[x=2,y=-1.5,shape=circle,label=$\mat{U}_3$,fontsize=\normalsize,RGB,color={255,255,255}]{w3}
    \Vertex[x=1,y=-1.5,shape=circle,fontsize=\normalsize,RGB,color={255,255,255},Pseudo]{e}
    \Edge[label=$N_1$,position=right](c)(w1)
    \Edge[label=$N_3$,position=right](c)(w3)
    \Edge[label=$R_2$,position=right](s)(e)
    \Edge[label=$R_1$,position=right](s)(w1)
    \Edge[label=$R_3$,position=right](s)(w3)
    \Edge[label=$N_2$,position=right](c)(e)
\end{tikzpicture}
    };
    \node (third) at (6, 0) {
    \begin{tikzpicture}
    \SetVertexStyle[MinSize=0.65\DefaultUnit]
    \Vertex[x=1,y=0,shape=circle,label=$\ten{K}$,fontsize=\normalsize,RGB,color={255,255,255}]{c}
    \Vertex[x=1,y=-3,shape=circle,label=$\ten{S}$,fontsize=\normalsize,RGB,color={255,255,255}]{s}
    \Vertex[x=0,y=-1.5,shape=circle,label=$\mat{U}_1$,fontsize=\normalsize,RGB,color={255,255,255}]{w1}
    \Vertex[x=1,y=-1.5,shape=circle,label=$\mat{U}_2$,fontsize=\normalsize,RGB,color={255,255,255}]{w2}
    \Vertex[x=2,y=-1.5,shape=circle,fontsize=\normalsize,RGB,color={255,255,255},Pseudo]{e}
    \Edge[label=$N_1$,position=right](c)(w1)
    \Edge[label=$N_2$,position=right](c)(w2)
    \Edge[label=$R_3$,position=right](s)(e)
    \Edge[label=$R_1$,position=right](s)(w1)
    \Edge[label=$R_2$,position=right](s)(w2)
    \Edge[label=$N_3$,position=right](c)(e)
\end{tikzpicture}
    };
\end{tikzpicture}
    \caption{Dual formulation (\ref{eq:dual}), from left to right of $\mat{E}_1$, $\mat{E}_2$ and $\mat{E}_3$.}
    \label{fig:dual}
    \end{subcaptionblock}%
    \caption{Primal formulation (\cref{fig:primal}) and dual formulation (\cref{fig:dual}) in tensor network diagram notation. In these diagrams, each circle represent a tensor and each edge departing from a circle represents an index of the corresponding tensor. A connecting edge denotes then a summation along the corresponding index, an unconnected edge denotes a free index, see \citet{cichocki_tensor_2016} for a more in-depth explanation.}
    \label{fig:primal_dual}
\end{figure*}

The primal optimization problem of \cref{def:primal_problem} defines explicitly a model-based approach in terms of primal weights, which is equivalent to the \gls*{mlsvd} but operates in the vector space $\mathbb{R}^{M_1\times M_2\times M_3}$ instead of in the usual vector space $\mathbb{R}^{N_1\times N_2 \times N_3}$. Just like other learning problems that admit a primal-dual formulation, each representation has its own advantages in terms of computational complexity.
\begin{remark}[Primal and dual model representation]
    The \gls*{mlsvd} is characterized by a primal (\ref{eq:primal}) representation in terms of weights $\mat{W}_1$, $\mat{W}_2$, $\mat{W}_3$ and feature maps $\mat{\Phi}_1$, $\mat{\Phi}_2$, $\mat{\Phi}_3$ and a dual (\ref{eq:dual}) representation in terms of a kernel tensor $\ten{K}$ defined in \cref{eq:Kten} and Lagrange multipliers $\mat{U}_1$, $\mat{U}_2$, $\mat{U}_3$:
    \begin{align*}
        \mat{E}_1 &= \mat{\Phi}_1\, \mat{C}_{(1)} \,(\mat{W}_3 \kron \mat{W}_2) \, \mat{S}_{(1)}\trans, \\
        \tag{P} \label{eq:primal} \mat{E}_2 &= \mat{\Phi}_2\, \mat{C}_{(2)} \,(\mat{W}_3 \kron \mat{W}_1) \, \mat{S}_{(2)}\trans, \\
        \mat{E}_3 &= \mat{\Phi}_3\, \mat{C}_{(3)} \,(\mat{W}_2 \kron \mat{W}_1) \, \mat{S}_{(3)}\trans, \\
        \mat{E}_1 &= \mat{K}_{(1)} \,(\mat{U}_3 \kron \mat{U}_2) \, \mat{S}_{(1)}\trans, \\
        \tag{D}\label{eq:dual} \mat{E}_2 &=  \mat{K}_{(2)} \,(\mat{U}_3 \kron \mat{U}_1) \, \mat{S}_{(2)}\trans, \\
        \mat{E}_3 &=  \mat{K}_{(3)} \,(\mat{U}_2 \kron \mat{U}_1) \, \mat{S}_{(3)}\trans.
    \end{align*}
\end{remark}
An alternative representation of \cref{eq:primal,eq:dual} in tensor networks diagram notation is presented in \cref{fig:primal_dual}.
After precomputing $\mat{\Phi}_1\, \mat{C}_{(1)}$, $\mat{\Phi}_2\, \mat{C}_{(2)}$ and $\mat{\Phi}_3\, \mat{C}_{(3)}$, the primal formulation in \cref{eq:primal} is more convenient in terms of storage and computation when the number of \emph{samples} is larger than the feature space one operates in, i.e. $N\ll M$, requiring a computational and storage complexity of $\mathcal{O}(NM^2)$ with $N\isdef \max(N_1,N_2,N_3)$, $M\isdef \max(M_1,M_2,M_3)$. Alternatively, when the feature space is larger than the number of samples, the dual formulation in \cref{eq:dual} is more attractive, requiring a computational and storage complexity of $\mathcal{O}(N^3)$.
In particular, the dual formulation allows to operate implicitly in an infinite-dimensional feature space, as long as the kernel tensor $\ten{K}$ can be computed in closed form.
We will now examine possible choices of features and compatibility tensors $\ten{C}$, in particular such that the output of \cref{def:primal_problem} is equivalent to the \gls*{mlsvd} of a known data tensor $\ten{X}$, as well as nonlinear extensions and the relationships with other methods.

\subsection{Linear \gls*{mlsvd} of a Data Tensor}\label{sec:linear_mlsvd}
\Cref{thm:mlsvd} requires that the data that is fed into the primal optimization problem comes in the form of matrices. Its dual optimization problem is then the \gls*{mlsvd} of $\ten{K}$ in \cref{eq:Kten}. We will now see how the \gls*{mlsvd} of a general data tensor $\ten{X}$ can be obtained with suitable choices of features and compatibility tensor.
The \emph{linear} \gls*{mlsvd} of a general data tensor $\ten{X}\in\mathbb{R}^{N_1\times N_2 \times N_3}$ follows from \cref{thm:mlsvd} if one considers as features the mode-$d$ unfoldings of said tensor.
\begin{theorem}[Linear \gls*{mlsvd} of a data tensor]\label{thm:linear_mlsvd}
    Consider the data tensor $\ten{X}\in\mathbb{R}^{N_1\times N_2 \times N_3}$ with linear feature maps    
    \begin{equation}\label{eq:linear_features}
        \mat{\Phi}_{d} = \mat{X}_{(d)},
    \end{equation}
    which correspond to each unfolding of $\ten{X}$.
    If the compatibility tensor $\ten{C}\in\mathbb{C}^{N_2 N_3\times N_1 N_3\times N_1 N_2}$ satisfies 
    \begin{equation}\label{eq:compatibility}
        \vectorize{\ten{X}} = \left(\mat{X}_{(3)} \kron \mat{X}_{(2)}  \kron \mat{X}_{(1)}\right) \vectorize{\ten{C}},
    \end{equation}
    then solving the primal problem in \cref{def:primal_problem} yields the \gls*{mlsvd} of data tensor $\ten{X}\in\mathbb{C}^{N_1\times N_2\times N_3}$.
\end{theorem}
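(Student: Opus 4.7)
The plan is to reduce the claim directly to \cref{thm:mlsvd} by showing that, under the two stated hypotheses, the kernel tensor $\ten{K}$ of \cref{eq:Kten} coincides with the data tensor $\ten{X}$. Since \cref{thm:mlsvd} already equates solving the primal problem of \cref{def:primal_problem} with computing the \gls*{mlsvd} of $\ten{K}$, it suffices to verify $\ten{K}=\ten{X}$ under the specified choices of feature maps and compatibility tensor.

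First I would substitute the linear feature maps $\mat{\Phi}_d = \mat{X}_{(d)}$ from \cref{eq:linear_features} into the kernel-tensor definition \cref{eq:Kten}, obtaining
\begin{equation*}
    \vectorize{\ten{K}} = \left(\mat{X}_{(3)} \kron \mat{X}_{(2)} \kron \mat{X}_{(1)}\right) \vectorize{\ten{C}}.
\end{equation*}
By the compatibility condition \cref{eq:compatibility}, the right-hand side equals $\vectorize{\ten{X}}$, so $\ten{K}=\ten{X}$. Invoking \cref{thm:mlsvd} then immediately identifies the dual problem as the \gls*{mlsvd} of $\ten{X}$, which is the desired conclusion.

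The main conceptual obstacle is not the substitution chain above, which is purely mechanical, but rather the sanity check that a compatibility tensor $\ten{C}$ satisfying \cref{eq:compatibility} actually exists, i.e.\ that $\vectorize{\ten{X}}$ lies in the column span of $\mat{X}_{(3)} \kron \mat{X}_{(2)} \kron \mat{X}_{(1)}$. I would establish this by leveraging the existence of the \gls*{mlsvd} of $\ten{X}$ itself: writing $\vectorize{\ten{X}} = (\mat{U}_3 \kron \mat{U}_2 \kron \mat{U}_1)\vectorize{\ten{S}}$, each factor $\mat{U}_d$ lies in the column space of the unfolding $\mat{X}_{(d)}$, so $\mat{U}_d = \mat{X}_{(d)} \mat{A}_d$ for some coefficient matrix $\mat{A}_d$ (for example $\mat{A}_d = \mat{X}_{(d)}\pinv \mat{U}_d$). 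The mixed-product property then exhibits $\vectorize{\ten{C}} = (\mat{A}_3 \kron \mat{A}_2 \kron \mat{A}_1)\vectorize{\ten{S}}$ as an explicit witness, confirming that the hypothesis of the theorem is satisfiable; in fact the solution set is highly non-unique, as the Kronecker system for $\ten{C}$ is vastly underdetermined.
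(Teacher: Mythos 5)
Your proof is correct and follows essentially the same route as the paper: substituting the linear features into the kernel-tensor definition so that the compatibility condition forces $\ten{K}=\ten{X}$, and then invoking \cref{thm:mlsvd}. The extra existence argument for $\ten{C}$ via the column-space factorization $\mat{U}_d = \mat{X}_{(d)}\mat{A}_d$ is a worthwhile addition that the paper only addresses informally (by noting the system is underdetermined for $D\geq 3$), but it does not change the overall approach.
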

\begin{proof}
    The compatibility condition of \cref{eq:compatibility} associated with the linear features of \cref{eq:linear_features} satisfies the assumptions of \cref{thm:mlsvd}, yielding thus the \gls*{mlsvd} of data tensor $\ten{X}$.
\end{proof}
The linear compatibility condition of \cref{eq:compatibility} can always be satisfied when the dimensionalty of the data tensor $D \geq 3$, as the right-hand-side is then underdetermined and thus yields infinitely many solutions. A standard choice is then to choose the $\ten{C}$ with minimal Frobenius norm.
In the $D=2$ case, the compatibility condition reduces to $\vectorize{\mat{X}} = (\mat{X}\trans \kron \mat{X})\vectorize{\mat{C}}$, recovering the compatibility condition of the primal-dual formulation of the \gls*{svd} identified by \citet{suykens_svd_2016}. Said condition is however satisfied exactly only if $N_1=N_2$ yielding thus $\mat{C}=\mat{X}\inv$. When the matrix $\mat{X}$ is not square there is thus loss of information, i.e. $\mat{C}=\mat{X}\pinv$ if $N_1>N_2$ and $\mat{C}\trans = {\mat{X}\trans}\pinv$ if $N_1<N_2$, which is with high likelihood not the case for the \gls*{mlsvd}.

\subsection{\gls*{kmlsvd}}\label{sub:kernel_mlsvd}

The \gls*{kmlsvd} of a data tensor $\ten{X}\in\mathbb{R}^{N_1\times N_2 \times N_3}$ is defined by means of a set of general \emph{nonlinear} feature maps 
\begin{equation}\label{eq:nonlinear_features}
    \mat{\Phi}_{d} = \mat{\varPhi}_d({\mat{X}_{d}}) \in \mathbb{R}^{N_d\times M_d},
\end{equation}
which map each dataset or unfolding to a higher-dimensional nonlinear space $\mathbb{R}^{M_d}$.
Solving the primal optimization problem of \cref{def:primal_problem} is then equivalent to the \gls*{mlsvd} of the \emph{kernel tensor} $\ten{K}\in\mathbb{C}^{N_1\times N_2 \times N_3}$ in \cref{thm:mlsvd}, whose defining equation we provide once more:
\begin{equation*}
    \vectorize{\ten{K}} \isdef \left(\mat{\Phi}_3 \kron \mat{\Phi}_2 \kron \mat{\Phi}_1 \right) \; \vectorize{\ten{C}} .
\end{equation*}
The compatibility tensor $\ten{C}$ determines whether the kernel tensor function (and tensor kernel) are subject to any kind of symmetry or permutational invariances and together with the choice of feature map, positivity. 
The tensor kernel does not need to be symmetric or positive-definite, meaning that the \gls*{mlsvd} can represent asymmetric relationships that arise between the orthogonal subspaces where the high-dimensional data lives in terms of the core tensor.

The explicit computation of the kernel tensor scales exponentially in the dimensionality of the original data tensor $\ten{X}$ when carried out explicitly. This limitation can instead be bypassed by means of the so-called \emph{kernel trick}, which carries out the computations implicitly in the features spaces.
Two examples of kernel functions whose inputs live in the same space are the tensor-variate generalizations of the polynomial and exponential kernels described first by \citet{salzo_solving_2018-2, salzo_generalized_2020} in the context of $L_p$-regularized learning problems. We report here their slightly adjusted definition.
\begin{example}[Polynomial and exponential kernel \citep{salzo_solving_2018-2, salzo_generalized_2020}]\label{example:poly_exp_kernels}
    The polynomial tensor kernel function of degree $p\geq1$ is defined as
    \begin{equation*}
        \vectorize{\ten{K}_{\text{polynomial}}^p} \isdef  \left((\mat{X}_3 \kron \mat{X}_2 \kron \mat{X}_1) \vectorize{\ten{I}}\right)^p
    \end{equation*}
    where $\mat{X}_1\in\mathbb{R}^{N \times M}$, $\mat{X}_2\in\mathbb{R}^{N \times M}$, $\mat{X}_3\in\mathbb{R}^{N \times M}$ and $\ten{I}\in\mathbb{C}^{M\times M \times M}$ is the $3$rd-order diagonal identity tensor. It is implicitly assumed that the inputs live in the same space and can hence be coupled by means of the identity tensor.
    The exponential tensor kernel is then defined as
    \begin{equation*}
        \ten{K}_{\text{exponential}} \isdef \exp(\ten{K}_{\text{polynomial}}^1).
    \end{equation*}
    When $p$ is odd then the kernels are not positive-definite. The exponential kernel is defined implicitly by an infinite-dimensional power series feature map \citep{salzo_solving_2018-2}. 
\end{example}
Following \citet{suykens_svd_2016}, it is straightforward to define the features and compatibility tensor which yields a kernel tensor of elementwise nonlinearities. 
\begin{example}[Elementwise nonlinear kernel]\label{def:elementwise}
    Elementwise nonlinear kernels of a data tensor $\ten{X}\in\mathbb{R}^{N_1\times N_2 \times N_3}$ are defined as
    \begin{equation*}
        \ten{K}_{\text{elementwise}} \isdef f(\ten{X}),
    \end{equation*}
    where $f(\cdot)$ is any elementwise nonlinear function. The features are the linear features of \cref{thm:linear_mlsvd} and the compatibility tensor $\ten{C}\in\mathbb{R}^{M_1 \times M_2 \times M_3}$ satisfies \cref{eq:compatibility}.
\end{example}


The primal optimization problem of \Cref{def:primal_problem} encompasses many decompositions. In what follows we provide a brief overview of some examples.

\subsection{Kernel Orthogonal \gls*{cpd}}
The kernel \gls*{cpd} can be interpreted as a special case of the \gls*{mlsvd} where the core $\ten{S} \in \mathbb{R}^{R \times R \times R}$ is a cubical diagonal tensor with nonzero entries and the factor matrices are not orthogonal. The orthogonal \gls*{cpd} \citep{kolda_orthogonal_2001,sorensen_canonical_2012} retains the orthogonality of the factor matrices and is obtained from \cref{thm:linear_mlsvd} by choosing linear feature maps and a compatibility tensor such that the compatibility equation \cref{eq:compatibility} is satisfied.

\subsection{Kernel \gls*{svd}}
The \gls*{kmlsvd} generalizes the \gls*{ksvd} to higher-order tensors. Is is therefore straightforward to obtain the \gls*{ksvd} of a matrix $\mat{K}$ from \cref{thm:mlsvd} by considering two data sources $\mat{\Phi}_1$ and $\mat{\Phi}_2$ and a positive diagonal regularization matrix $\mat{S}$.
Said primal \gls*{ksvd} optimization problem coincides with the one identified by \citet{suykens_svd_2016}.
\Cref{thm:nonlinear_mlsvd} then results in the shifted eigenvalue problems
\begin{align*}
    \mat{U}_1\, \mat{S} &= \mat{K} \,\mat{U}_2, \\
    \mat{U}_2\, \mat{S}\trans &= \mat{K}\trans \,\mat{U}_1,
\end{align*}
where $\mat{K} = \mat{\Phi}_1 \mat{C} \mat{\Phi}_2\trans$ is the kernel matrix, which is not required to be symmetric or positive-definite. 
The solution of the shifted eigenvalue problem is then the \gls*{svd} of $\mat{K}$, i.e. $\mat{K} = \mat{U}_1\, \mat{S}\, \mat{U}_2\trans$ \citep{lanczos_linear_1958}.
Similarly, the \emph{linear} \gls*{svd} \citep{suykens_svd_2016} is also recovered by additionally considering as features rows and columns of the data matrix $\mat{X}\in\mathbb{R}^{N_1\times N_2}$ and as compatibility matrix $\mat{C}\in\mathbb{R}^{M_1\times M_2}$ as defined in \cref{sub:kernel_mlsvd}.
Notably in contrast with the \gls*{kmlsvd} case detailed in \cref{sec:linear_mlsvd}, in the $2$-dimensional \gls*{ksvd} case it is possible to easily define an asymmetric kernel function $\kappa(\mat{x}_1, \mat{C}\mat{x}_2)$ based on a predefined symmetric (positive-definite) kernel function $\kappa(\cdot,\cdot)$, exploiting the fact that the compatibility matrix maps one input space to the other, as proposed by \citet{tao_nonlinear_2023}.

\subsection{Kernel \gls*{pca}}
Consider a feature matrix $\mat{\Phi} \in \mathbb{R}^{N \times M}$. In the primal problem of \cref{def:primal_problem} we are now interested in finding two weight matrices $\mat{W}_1, \mat{W}_2$ that project the data to score variables $\mat{\Phi}\mat{W}_1, \mat{\Phi}\mat{W}_2$ with maximal variance. The compatibility matrix $\mat{C}$ is chosen to be a positive diagonal matrix, and the regularization parameter matrix $\mat{S}$ is chosen diagonal with positive entries. \Cref{thm:nonlinear_mlsvd} then results in the shifted eigenvalue problems
\begin{align*}
    \mat{U}_1 \mat{S} &= \mat{K} \mat{U}_2, \\
    \mat{U}_2 \mat{S} &= \mat{K} \mat{U}_1,
\end{align*}
where $\mat{K} = \mat{\Phi}\mat{C}\mat{\Phi}\trans$ is a symmetric positive-definite kernel matrix. 
The symmetry of the kernel matrix implies that $\mat{U}_1=\mat{U}_2$, ensuring that the dual problem is the eigenvalue decomposition of the kernel matrix i.e. $\mat{K} = \mat{U}\mat{S}\mat{U}\trans$.
Linear \gls*{pca} is recovered by choosing $\mat{\Phi} = \mat{X}$.

\subsection{Higher-Order \gls*{kpca}}

\Cref{thm:mlsvd} makes it possible to define a higher-order \gls*{kpca} which considers higher-order interactions between the (mapped) data. Consider e.g. three identical features $\mat{\Phi}_1=\mat{\Phi}_2=\mat{\Phi}_3 = \mat{\Phi}$ and akin to \gls*{kpca} a supersymmetric compatibility tensor $\ten{C}$ and positive superdiagonal regularization $\ten{S}$. Then by \cref{thm:mlsvd} the dual optimization problem is
\begin{align*}
    \mat{U}_1 \mat{S} &= \mat{K} \left(\mat{U}_3 \kron \mat{U}_2 \right),\\
    \mat{U}_2 \mat{S} &= \mat{K} \left(\mat{U}_3 \kron \mat{U}_1 \right),\\
    \mat{U_3} \mat{S} &= \mat{K} \left(\mat{U}_2 \kron \mat{U}_1 \right),
\end{align*}
where $\mat{K} = \mat{K}_{(1)} = \mat{K}_{(2)} \defis \mat{K}_{(3)}$ and $\mat{S}_{(1)} = \mat{S}_{(2)} = \mat{S}_{(3)} \defis \mat{S}$ by construction. Consequently by the orthogonality of $\mat{U}_1$, $\mat{U}_2$ and $\mat{U}_3$ it follows that $\mat{U}_1=\mat{U}_2=\mat{U}_3\defis \mat{U}$, resulting in $\mat{U} \mat{S} = \mat{K}(\mat{U}\kron \mat{U})$. 
Examples of kernel tensors that encode said higher-order interactions are the polynomial and exponential kernels in \cref{example:poly_exp_kernels} \citep{salzo_solving_2018-2,salzo_generalized_2020}.

\section{Related work}\label{sec:related_work}

As already mentioned, the development of a primal-dual formulation of \gls*{pca} was carried out by \citet{suykens_support_2003}, but the idea to kernelize the method is older and generally attributed to \citet{mika_kernel_1998}. 

The \gls*{svd} was cast in the primal-dual framework by \citet{suykens_svd_2016}, who proposed to kernelize the approach. This idea was recently further developed by \citet{tao_nonlinear_2023}, who proposed to use the compatibility matrix to build asymmetric kernels departing from standard symmetric and positive-definite kernels. \citet{tao_nonlinear_2023} also extended the Nyström method in order to efficiently approximate said asymmetric kernels, enabling to fully exploit the computational advantages that stem from the primal \gls*{ksvd} optimization problem.
\citet{chen_primal-attention_2023,chen_self-attention_2024} proposed to decompose the self-attention kernel matrix in \emph{Transformer} networks \citep{vaswani_attention_2017} using the primal formulation of the \gls*{ksvd}. This enables to fully capture its asymmetric nature in contrast with the existing alternatives, which consider only the row or column-space and are therefore effectively discarding information. 
\citet{he_learning_2023} considered asymmetric kernels in the \gls*{ls-svm} primal-dual formulation and consider applications in the context of directed graphs, where asymmetry is naturally present.  
\citet{he_random_2024} extended the celebrated \gls*{rff} \citep{rahimi_random_2007} kernel approximation framework to handle asymmetric kernels.

The \gls*{mlsvd} and the related Tucker decomposition \citep{tucker_implications_1963-1,tucker_mathematical_1966}, which relaxes the orthogonality constraint on the factor matrices, has applications in signal and image processing, computer vision, chemometrics, finance, human motion analysis, data mining, machine learning and deep learning. 
We redirect the interested reader to the survey papers of \citet{kolda_tensor_2009,cichocki_tensor_2016,cichocki_tensor_2017,panagakis_tensor_2021}.
Kernelizing the \gls*{mlsvd} was attempted by \citet{li_kernel-based_2005,zhao_kernelization_2013-1} who proposed to map each unfolding of a data tensor to the \emph{same} feature space, whose left-singular vectors are the factor matrices of the decomposition. The approach can be easily encompassed in \cref{def:primal_problem}, which is more general as it allows for asymmetry and permutational variance by selecting the core tensor appropriately.

\section{Conclusions, Further Research and Applications}\label{sec:conclusion}

In this paper we extend the Lanczos decomposition theorem to the tensor case, enabling us to cast the \gls*{mlsvd} as a primal-dual optimization problem. This allows for a straightforward nonlinear extension in terms of feature maps in the primal, whose associated dual optimization problem is then the \gls*{mlsvd} of a kernel tensor. Importantly, the presented optimization framework recovers as special cases both \gls*{svd} and \gls*{pca}.
Besides the nonlinear extension, the benefits of having the \gls*{mlsvd} defined in terms of a primal and a dual optimization problem are computational in nature. In particular, the practitioner can opt for solving the cheapest optimization problem given the circumstances. The primal formulation, as typically the case in kernel methods, is more convenient when dealing with large sample sizes and relatively small features, while the dual allows to tackle the case where the features are large w.r.t the number of samples, or even infinite-dimensional.

In our opinion, further research is needed, specifically focusing on the theory, analysis, and design of kernel functions with more than two vector inputs. 
This research could explore the design of generally applicable kernel functions, such as the e.g. the ubiquitous Gaussian kernel in the two input case, or be driven by specific applications. 
An interesting research direction would then also consist in investigating the possible decomposition or approximation of said kernels using feature maps and core tensors, in order to fully leverage the computational advantages of the primal formulation.
The proposed approach can in principle be utilized as a nonlinear extension of \gls*{mlsvd} and thus be employed whenever its linear counterpart is used, which means e.g. for feature extraction, signal analysis, image processing and computer vision. An notable area of possible application is deep learning, where the primal formulation of the \gls*{mlsvd} could be applied to approximate nonlinear kernel tensors which often arise. 
Two examples are the activated convolution kernel in \gls*{cnn}, which thus far has been trained decomposed in Tucker form before being activated, or generalizations of the self-attention mechanism in \emph{Transformer} having multiple attention vectors instead of only queries, values and keys.

\bibliographystyle{abbrvnat}
\bibliography{bibliography}

\appendix
\section{A Primal-dual formulation for the \gls*{mlsvd}}
\label{appendixA}
For completeness we present the primal \gls*{mlsvd} optimization problem for any tensor of order $D$.
\begin{definition}[Primal \gls*{mlsvd} optimization problem]\label{def:primal_problemD}
Given $D$ feature matrices $\mat{\Phi}_1 \in \mathbb{R}^{N_1 \times M_1},\mat{\Phi}_2\in\mathbb{R}^{N_2 \times M_2}, \ldots, \mat{\Phi}_D\in\mathbb{R}^{N_D \times M_D}$, a compatibility tensor $\ten{C}\in\mathbb{R}^{M_1\times M_2\times \cdots \times  M_D}$, and regularization parameters $\ten{S}\in\mathbb{R}^{R_1\times R_2\times \cdots \times R_D}$, we define the primal optimization problem as
    \begin{align*}\label{eq:primal_problem}
        &\max_{\mat{W}_1,\ldots,\mat{W}_D,\mat{E}_1,\ldots,\mat{E}_D} J(\mat{W}_1,\ldots,\mat{W}_D,\mat{E}_1,\ldots,\mat{E}_D) \isdef \\
                &  \frac{1}{2}\sum_{d=1}^D \Tr{\left(\mat{E}_d \, (\mat{S}_{(d)}\mat{S}_{(d)}\trans)^{-1}\, \mat{E}_d\trans \right)} - (D-1)\,\vectorize{\ten{C}}\trans\,\left(\mat{W}_D \kron \cdots \kron \mat{W}_1 \right)\, \vectorize{\ten{S}} \\
                & + \frac{D-2}{2} \vectorize{\ten{C}}^T \left(\mat{\Phi}_D\trans \mat{\Phi}_D \kron \cdots   \kron \mat{\Phi}_1\trans \mat{\Phi}_1\right) \vectorize{\ten{C}} \\
                \textrm{such that:} \ &  \mat{E}_1 = \mat{\Phi}_1 \, \mat{C}_{(1)} \, \left( \mat{W}_D \kron \cdots \kron \mat{W}_3 \kron \mat{W}_2 \right) \, \mat{S}_{(1)}\trans,\\
    & \mat{E}_2= \mat{\Phi}_2 \, \mat{C}_{(2)} \, \left( \mat{W}_D \kron \cdots  \kron \mat{W}_3 \kron \mat{W}_1 \right) \, \mat{S}_{(2)}\trans,\\
    & \quad \quad  \vdots \\
    & \mat{E}_D =  \mat{\Phi}_D \, \mat{C}_{(D)} \, \left( \mat{W}_{D-1} \kron \cdots \kron  \mat{W}_2 \kron \mat{W}_1 \right) \, \mat{S}_{(D)}\trans .
    \end{align*}    
\end{definition}
\Cref{def:primal_problemD} reduces to Definition~\ref{def:primal_problem} when $D=3$.
\begin{theorem}\label{thm:mlsvdD}
    The primal optimization problem of \cref{def:primal_problemD} is equivalent to the \gls*{mlsvd} of the tensor $\ten{K}\in\mathbb{C}^{N_1 \times N_2 \times \cdots \times N_D}$. The tensor $\ten{K}$ is defined as
    \begin{equation}
        \vectorize{\ten{K}} \isdef \left(\mat{\Phi}_D \kron \cdots \kron  \mat{\Phi}_2 \kron \mat{\Phi}_1 \right) \; \vectorize{\ten{C}} .
    \end{equation}
\end{theorem}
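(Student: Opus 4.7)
The plan is to mirror the structure of the proof of \Cref{thm:mlsvd} (the $D=3$ case). First I would establish a generalization of \Cref{lemma:mlsvd} to arbitrary order $D$, showing that a tensor $\ten{X}\in\mathbb{R}^{N_1\times\cdots\times N_D}$ of multilinear rank $(R_1,\ldots,R_D)$ admits the \gls*{mlsvd} form if and only if the $D$ coupled equations
\begin{equation*}
\mat{U}_d\,\mat{S}_{(d)} = \mat{X}_{(d)}\bigl(\mat{U}_D\kron\cdots\kron\mat{U}_{d+1}\kron\mat{U}_{d-1}\kron\cdots\kron\mat{U}_1\bigr),\quad d=1,\ldots,D,
\end{equation*}
hold with each $\mat{S}_{(d)}\mat{S}_{(d)}\trans$ positive diagonal. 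The two-step structure transfers directly: left-multiplying the $d$-th equation by $\mat{U}_d\trans$ produces $D$ distinct unfoldings of a common tensor $\ten{D}\in\mathbb{R}^{R_1\times\cdots\times R_D}$; equating at least $\lceil D/2\rceil$ out of the $\binom{D}{2}$ pairs and applying the mixed-product property forces $\mat{U}_d\trans\mat{U}_d=\mat{I}_{R_d}$ for every $d$; right-multiplying each equation by $\mat{S}_{(d)}\trans\mat{U}_d\trans$ then reproduces the eigendecomposition step identifying $\mat{U}_d$ with an orthogonal basis of the column space of $\mat{X}_{(d)}$.

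Second, I would form the Lagrangian with $D$ multiplier matrices $\mat{U}_1,\ldots,\mat{U}_D$ and derive the \gls*{kkt} conditions. Stationarity with respect to $\mat{E}_d$ yields $\mat{E}_d=\mat{U}_d\,\mat{S}_{(d)}\mat{S}_{(d)}\trans$, while stationarity with respect to $\mat{U}_d$ reproduces the primal constraints. The delicate condition is stationarity with respect to $\mat{W}_d$: differentiating the linear term in $J$ produces $(D-1)\,\mat{C}_{(d)}\bigl(\mat{W}_D\kron\cdots\kron\mat{W}_{d+1}\kron\mat{W}_{d-1}\kron\cdots\kron\mat{W}_1\bigr)\mat{S}_{(d)}\trans$, whereas differentiating the $(D-1)$ constraint equations in which $\mat{W}_d$ appears gives, via the product rule applied to the Kronecker product, a sum of $(D-1)$ terms in which exactly one factor $\mat{W}_e$ has been replaced by $\mat{\Phi}_e\trans\mat{U}_e$. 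Equating the two sides for each $d$ and varying $d$ across all indices forces the primal-dual relations $\mat{\Phi}_e\trans\mat{U}_e=\mat{W}_e$ for every $e=1,\ldots,D$.

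Substituting these relations back into the primal constraints and applying the $D$-mode extension of \cref{eq:Kten}, the chain $\bigl(\mat{\Phi}_D\trans\kron\cdots\kron\mat{\Phi}_1\trans\bigr)$ contracts with $\ten{C}$ into the appropriate unfolding $\mat{K}_{(d)}$. Combined with the \gls*{kkt} condition for $\mat{E}_d$, one obtains the $D$ shifted equations
\begin{equation*}
\mat{U}_d\,\mat{S}_{(d)} = \mat{K}_{(d)}\bigl(\mat{U}_D\kron\cdots\kron\mat{U}_{d+1}\kron\mat{U}_{d-1}\kron\cdots\kron\mat{U}_1\bigr),\quad d=1,\ldots,D,
\end{equation*}
which, by the generalized Lanczos theorem established in the first step, coincide with the \gls*{mlsvd} of $\ten{K}$.

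The main obstacle I anticipate is the bookkeeping required by general $D$. The coefficients $-(D-1)$ and $(D-2)/2$ in the objective of \Cref{def:primal_problemD} are precisely those that balance the Lagrangian derivatives and make the primal objective vanish at optimality, since $\tfrac{D}{2}-(D-1)+\tfrac{D-2}{2}=0$ generalizes the $D=3$ corollary. A secondary subtlety lies in the Lanczos extension itself: the pair-argument requires that for each index $d$ there exists at least one partner $e$ for which $\mat{U}_e\trans\mat{U}_e$ is full-rank, so that the trivial solutions $\mat{U}_e=\mat{0}$ are excluded and $\lceil D/2\rceil$ pairs suffice to cover all $D$ indices---a straightforward combinatorial point that nevertheless deserves to be stated carefully.
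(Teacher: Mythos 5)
Your proposal follows essentially the same route as the paper's proof: form the Lagrangian with $D$ multiplier matrices, derive the \gls*{kkt} conditions for $\mat{W}_d$, $\mat{E}_d$ and $\mat{U}_d$, read off $\mat{W}_d = \mat{\Phi}_d\trans\mat{U}_d$ and $\mat{E}_d = \mat{U}_d\,\mat{S}_{(d)}\mat{S}_{(d)}\trans$, and substitute into the constraints to recover the $D$ generalized Lanczos equations for $\ten{K}$. The only difference is that you spell out the order-$D$ extension of the Lanczos lemma (including the pair-covering subtlety), which the paper merely asserts is a straightforward generalization of the $D=3$ case; that is a welcome addition rather than a deviation.
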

\begin{proof}
The corresponding Lagrangian for the optimization problem in~\ref{def:primal_problemD} has $D$ Lagrange multiplier matrices $\mat{U}_1,\ldots,\mat{U}_D$. We write out the \gls*{kkt} conditions for $\mat{W}_1,\mat{E}_1$ and $\mat{U}_1$, as the remaining conditions are similar.  
\begin{align*}
    &\frac{\partial \mathcal{L}}{\partial \mat{W}_1} = 
    \mat{0} \iff (D-1)\, {\mat{C}_{(1)}} (\mat{W}_D \kron \cdots  \kron \mat{W}_2) \, \mat{S}_{(1)}\trans = {\mat{C}_{(1)}}  \underbrace{\left( \mat{W}_D \kron \cdots \kron \mat{\Phi}_2\trans \mat{U}_2  + \cdots +  \mat{\Phi}_D\trans \mat{U}_D \kron \cdots \kron \mat{W}_2 \right)}_{(D-1) \textrm{ terms}} \mat{S}_{(1)}\trans. 
\end{align*}
The \gls*{kkt} conditions for the remaining weight matrices will have similar form and all equalities are trivially satisfied when $\mat{W}_d =\mat{0}$ or $\mat{W}_d=\mat{\Phi}_d\trans \mat{U}_d \; (1 \leq d \leq D)$.
Setting the partial derivative of the Lagrangian with respect to $\mat{E}_1$ to zero results in the condition
\begin{align*}
\frac{\partial \mathcal{L}}{\partial \mat{E}_1} = \mat{0} \iff \mat{E}_1 = \mat{U}_1 \mat{S}_{(1)} \mat{S}_{(1)}\trans,    
\end{align*}
and likewise for the other error matrices. 
Substitution of $\mat{W}_1$ and $\mat{E}_1$ into the constraint results in
\begin{align*}
  \mat{U}_1 \mat{S}_{(1)} \mat{S}_{(1)}\trans &=  \mat{\Phi}_1 \, \mat{C}_{(1)} \, \left( \mat{W}_D \kron \cdots \kron \mat{W}_3 \kron \mat{W}_2 \right) \, \mat{S}_{(1)}\trans,\\
  &= \mat{\Phi}_1 \, \mat{C}_{(1)} \, \left( \mat{\Phi}_D\trans  \kron \cdots \kron \mat{\Phi}_2\trans \right) \, \left(\mat{U}_D \kron \cdots \kron \mat{U}_2 \right) \mat{S}_{(1)}\trans,\\
  &= \mat{K}_{(1)}\, \left(\mat{U}_D \kron \cdots \kron \mat{U}_2 \right) \mat{S}_{(1)}\trans,
\end{align*}
which is the first equation of the generalized Lanczos theorem. A similar construction applies for the remaining equations.
\end{proof}

\begin{corollary}
The \gls*{mlsvd} solution of the dual problem in \cref{thm:mlsvdD} results in a zero objective function $(J=0)$ in the primal optimization problem of \cref{def:primal_problemD}.
\end{corollary}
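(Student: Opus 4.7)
The plan is to mirror the proof of the $D=3$ corollary: substitute the KKT conditions into the primal objective $J$ of Definition~\ref{def:primal_problemD} and track how each of the three groups of terms reduces to a scalar multiple of $\vectorize{\ten{K}}\trans\vectorize{\ten{K}}$, so that the final step becomes merely a check that the $D$-dependent coefficients cancel.

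First I would apply the KKT conditions derived in the proof of Theorem~\ref{thm:mlsvdD}, namely $\mat{E}_d=\mat{U}_d\,\mat{S}_{(d)}\mat{S}_{(d)}\trans$ and $\mat{W}_d=\mat{\Phi}_d\trans\mat{U}_d$, to each term of $J$. For the $D$ variance terms, cyclic invariance of the trace reduces each to $\Tr(\mat{U}_d\,\mat{S}_{(d)}\mat{S}_{(d)}\trans\,\mat{U}_d\trans)$; combining the generalized Lanczos equations of Theorem~\ref{thm:mlsvdD} with the semi-orthogonality of the $\mat{U}_d$ identifies each of these with $\vectorize{\ten{K}}\trans\vectorize{\ten{K}}$, so the variance group contributes $\frac{D}{2}\,\vectorize{\ten{K}}\trans\vectorize{\ten{K}}$. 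For the coupling term I would substitute $\mat{W}_d=\mat{\Phi}_d\trans\mat{U}_d$ and invoke the mixed-product property to pull the $\mat{\Phi}_d\trans$ factors together; the definition of $\ten{K}$ in Theorem~\ref{thm:mlsvdD} then rewrites $\vectorize{\ten{C}}\trans(\mat{\Phi}_D\trans\kron\cdots\kron\mat{\Phi}_1\trans)$ as $\vectorize{\ten{K}}\trans$, and the MLSVD factorization $\vectorize{\ten{K}}=(\mat{U}_D\kron\cdots\kron\mat{U}_1)\vectorize{\ten{S}}$ finishes the reduction to $-(D-1)\,\vectorize{\ten{K}}\trans\vectorize{\ten{K}}$. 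The constant term is already a quadratic form in $\vectorize{\ten{C}}$ against $\mat{\Phi}_D\trans\mat{\Phi}_D\kron\cdots\kron\mat{\Phi}_1\trans\mat{\Phi}_1$, which by the very definition of $\ten{K}$ is precisely $\frac{D-2}{2}\,\vectorize{\ten{K}}\trans\vectorize{\ten{K}}$.

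The only thing left is the scalar identity $\frac{D}{2}-(D-1)+\frac{D-2}{2}=0$, which holds for every $D\ge 2$ and specializes to the $\frac{3}{2}-2+\frac{1}{2}=0$ cancellation of the $D=3$ corollary; this is of course exactly why the prefactors $(D-1)$ and $(D-2)/2$ were chosen in the objective of Definition~\ref{def:primal_problemD}. I do not anticipate a real obstacle, since the argument is essentially the coefficient bookkeeping already performed in the $D=3$ case; the one place that merits care is the Kronecker-product manipulation in the coupling term, where the mixed-product property has to be applied systematically to push all $D$ feature factors past the Lagrange multipliers and collapse the expression into $\vectorize{\ten{K}}\trans\vectorize{\ten{K}}$.
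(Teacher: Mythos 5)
Your proposal is correct and follows essentially the same route as the paper's own proof: substitute the KKT conditions $\mat{E}_d=\mat{U}_d\,\mat{S}_{(d)}\mat{S}_{(d)}\trans$ and $\mat{W}_d=\mat{\Phi}_d\trans\mat{U}_d$, reduce each of the three groups of terms to a multiple of $\vectorize{\ten{K}}\trans\vectorize{\ten{K}}$ via the definition of $\ten{K}$, its \gls*{mlsvd}, and trace cyclicity, and conclude with the cancellation $\frac{D}{2}-(D-1)+\frac{D-2}{2}=0$. You in fact spell out the intermediate Kronecker manipulations slightly more explicitly than the paper does, but the argument is the same.
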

\begin{proof}
Plugging in the \gls*{kkt} conditions for $\mat{E}_1,\ldots,\mat{E}_D$ and $\mat{W}_1,\ldots,\mat{W}_D$ into the primal objective function $J$ results in
\begin{align*}
&\frac{1}{2}\sum_{d=1}^D \Tr{\left(\mat{E}_d \, (\mat{S}_{(d)}\mat{S}_{(d)}\trans)^{-1}\, \mat{E}_d\trans \right)} - (D-1)\vectorize{\ten{C}}\trans\,\left(\mat{W}_D \kron \cdots \kron \mat{W}_2 \kron \mat{W}_1 \right)\, \vectorize{\ten{S}} + \frac{(D-2)}{2} \vectorize{\ten{K}}^T \vectorize{\ten{K}}  \\
=&\frac{1}{2}\sum_{d=1}^D \Tr{\left(\mat{U}_d \, \mat{S}_{(d)}\mat{S}_{(d)}\trans\, \mat{U}_d\trans \right)} - (D-1)\vectorize{\ten{C}}\trans\,\left(\mat{\Phi}_D\trans \mat{U}_D \kron \cdots \kron \mat{\Phi}_2\trans \mat{U}_2 \kron \mat{\Phi}_1\trans \mat{U}_1 \right)\, \vectorize{\ten{S}}+ \frac{1}{2} \vectorize{\ten{K}}^T \vectorize{\ten{K}} \\
=&\left( \frac{D}{2} - (D-1) + \frac{(D-2)}{2} \right) \vectorize{\ten{K}}^T \vectorize{\ten{K}} =0. \\
\end{align*}
\end{proof}
\end{document}